%% file: main.tex
\documentclass[11pt]{article}

\usepackage[T1]{fontenc}
\usepackage{lmodern}

\usepackage{amsmath}
\usepackage{amssymb}
\usepackage{mathtools}

\usepackage{graphicx}
\usepackage{float}
\usepackage{subcaption}

\usepackage{booktabs}
\usepackage{multirow}
\usepackage{array}

\usepackage{amsthm}

\newtheorem{theorem}{Theorem}
\newtheorem{lemma}{Lemma}
\newtheorem{proposition}{Proposition}
\theoremstyle{remark}
\newtheorem{remark}{Remark}
\newtheorem{definition}{Definition}

\usepackage{algorithm}
\usepackage{algpseudocode}

\usepackage{microtype}

\usepackage{booktabs}
\usepackage{tabularx}
\usepackage{array}

\usepackage{xcolor}
\usepackage{enumitem}

\usepackage[numbers,sort&compress]{natbib}

\usepackage{hyperref}
\usepackage[nameinlink,capitalize]{cleveref}

\hypersetup{
    colorlinks=true,
    linkcolor=blue,
    citecolor=blue,
    urlcolor=blue
}

\usepackage[
    margin=1in
]{geometry}

\newenvironment{IEEEkeywords}{\section*{Keywords}}{}

\title{
PAC-Private Autoregressive Generation: Calibrating Noise to Ensemble Disagreement
}

\author{
Mina Mirzadehsarcheshmeh,
Amir Keyvan Khandani
}

\date{}

\begin{document}

\maketitle

\begin{abstract}
Language models adapted on private text are often served through APIs, exposing information through generated tokens rather than model weights. Differentially private training protects the model parameters, while private prediction instead protects the released outputs. Existing private-prediction methods incur privacy cost with every release. In PMixED, maintaining a fixed privacy target increasingly shifts the output toward the public model as the generation horizon grows. PAC privacy offers a different approach. It calibrates noise to the variability of the output across possible secrets, so stable predictions require less noise. PAC-private prediction has been studied for classification, but not, to our knowledge, for autoregressive generation.

We construct $m=128$ overlapping worlds from the private corpus, with each record appearing in exactly $m/2$ worlds, and train one adapter per world over a frozen public model. The realized world is the secret. At each token, the public model defines a candidate set and the worlds vote within it. Their posterior-weighted disagreement determines the PAC noise, while unanimous positions require no calibration noise. We prove $I(S;Y_{1:T}) \le I(S;H_T) \le bT$ over $T$ released tokens. Our main contributions are extending PAC privacy to autoregressive generation, handling adaptive self-generated contexts, and introducing coupled decoding to preserve privacy accounting while avoiding greedy degeneration.

On WikiText-103 with GPT-2-small, we retain 74\% of the fine-tuning gain at a per-token
budget of $2^{-32}$, with membership-inference success bounded to at most 51.08\% after
$10^6$ tokens; leakage estimated directly from posterior entropy is roughly 17\% of the
charged budget. Inference privacy is not content protection: where membership advantage on a
memorized canary is indistinguishable from zero, the canary is still emitted at the same
rate. Against PMixED at matched membership-inference bounds on the same data universe and
test set, we retain 98\% of non-private headroom across horizons from $10^2$ to $10^6$
tokens, versus at most 56\%, with no crossover.\footnotemark[1]
\end{abstract}

\begin{IEEEkeywords}
PAC privacy, private prediction, autoregressive generation, large language models,
mutual information, adaptive composition, membership inference, coupled decoding,
memorization
\end{IEEEkeywords}

\footnotetext[1]{ChatGPT and Claude were used only for editorial assistance, including language polishing and clarity improvements.}


\input{sections/01_introduction}

\input{sections/02_background}

\input{sections/03_mechanism}

\input{sections/04_theory}

\input{sections/05_experimental_setup}

\input{sections/06_results}

\input{sections/07_discussion}

\input{sections/08_conclusion}


\bibliographystyle{plainnat}
\bibliography{references}


\clearpage
\appendix

\input{appendix/appendix}

\end{document}

%% file: sections/01_introduction.tex
\section{Introduction}
\label{sec:intro}

Language models adapted on private text can disclose that text through the interface they
are served on. A model fine-tuned on clinical notes, internal correspondence, or customer
records is typically deployed behind an API: users submit prompts and read generated
continuations, and never see the weights. Yet the continuations themselves carry
information about the adaptation corpus. Models memorize training sequences measurably
\citep{carlini2019secretsharer}, extraction attacks recover them from deployed systems
\citep{carlini2021extracting}, and membership inference can determine whether a given
record was used at all \citep{shokri2017mia,carlini2022lira}. The privacy question for a
served generative model is therefore not what its parameters reveal, but what its output
stream reveals over the course of a long interaction.

The standard answer is to privatize training. DP-SGD clips per-example gradients and adds
calibrated noise, yielding weights that satisfy differential privacy
\citep{abadi2016dpsgd}, after which serving is post-processing and costs nothing further.
The difficulty is that this method must remain private even against an adversary who holds the model parameters. In deployment, however, an adversary typically sees only the outputs. This stronger threat model comes at substantial cost because sensitivity is hard to compute tightly for non-convex training and is therefore enforced through clipping \citep{flemings2024,zsd}. The
alternative is to privatize the outputs directly. Private prediction
\citep{dwork2018prediction} answers queries through an interface that is itself private,
which suits an API deployment exactly, and recent systems \citep{ginart2022submix,flemings2024} bring it to next-token
prediction. However, privatizing outputs means the privacy budget
is consumed as predictions are served, and a generated passage is not one prediction but
hundreds. In PMixED, a fixed total privacy budget must be shared across the query horizon \citep{flemings2024}. As the horizon grows, the per-token constraint becomes tighter. This pushes the output toward the public model and reduces the contribution of the private experts, causing utility to decline.

PAC privacy \citep{xiao2023pac} offers a different trade. Rather than calibrating noise to
a worst-case sensitivity, it calibrates to the measured variability of the output across
the possible realizations of the secret, and bounds an adversary's posterior advantage at
any inference task through the mutual information between the secret and the release.
Stable outputs are therefore inexpensive to privatize, and the required noise can be estimated directly from the observed disagreement rather than from a conservative a priori bound. \citet{zsd} make this workable under adaptive,
adversarially chosen queries against a secret that persists, by having the curator track
the same posterior the adversary would and calibrate against it, and they instantiate it
for classification. They identify language-model inference as a promising future direction, especially for generating longer sequences while maintaining high utility. This paper
takes it up.

Doing so is not a matter of running the classification mechanism token by token. The
secret is drawn once and answers every token of the interaction, so the composition must
hold for a persistent secret rather than a resampled one. The query stream is
self-generated: each released token is appended to the context that produces the next
query, so adaptivity is built into the setting rather than merely permitted to an
adversary. The output space is a vocabulary of tens of thousands of tokens, over which
calibrating noise is computationally burdensome and highly damaging to utility. And
usable long-form text requires sampled decoding, yet letting each candidate model sample
independently would inject decoder randomness into the very quantity the calibration reads
--- manufacturing disagreement where the models actually agree, and inflating the noise
accordingly.

Our mechanism addresses these shortcomings together. The private corpus is assigned to $m = 128$
overlapping subsets, or \emph{worlds}, each record appearing in exactly $m/2$ of them, and
one adapter is trained per world on a frozen public base; the secret is which world is
real. At each token, the public model alone fixes a candidate set, every world votes for
one token within it, and the posterior-weighted disagreement among those votes determines
the calibrated noise --- so a position where all worlds agree carries no information about
which world is real and requires no noise at all. The curator adds that noise to the
realized world's vote, uses the resulting internal vector to update its posterior over
worlds, and releases only the argmax as a token. Sampling is preserved by sharing the same public randomness across all worlds. Each world still samples from its own temperature-$\tau$ conditional distribution. Because the worlds use the same random coins, worlds with similar probabilities often choose the same token.

This approach guarantees that the mutual information between the secret and the entire released
token stream is at most $bT$ for a per-token budget $b$ and $T$ released tokens, which
results in a bound on the success of any inference attack given only its prior. Four
issues require explicit treatment in our autoregressive instantiation. The privacy accounting is defined on internal vectors, but the user only sees released tokens. We therefore show that the token output leaks no more information than the internal mechanism. Under coupled decoding, this proof must also include the shared public randomness, because the candidate set depends on those random coins. One-hot votes
make the output covariance singular at every step and identically zero under unanimity, so
calibration must be defined on the active subspace and the belief update must use a
pseudoinverse. Autoregressive queries must be shown to satisfy the conditional
independence the composition theorem assumes, despite being functions of the mechanism's
own outputs. And coupled sampling must be shown to preserve the determinism the noise
calibration requires, which it does conditionally on coins that are public, independent of
the secret, and recorded in the history.

Empirically, on WikiText-103 with GPT-2-small, the mechanism retains at least 74\% of the
fine-tuning gain over the public model at a per-token budget of $2^{-32}$ --- a budget
that bounds membership-inference success to at most 51.08\% after $10^6$ released tokens ---
and accuracy is flat across the entire budget range from $2^{-4}$ to $2^{-32}$. That
flatness is not the budget failing to reach the mechanism: the calibrated noise scale
spans four orders of magnitude and flips on disagreeing positions rise from 26\% to 52\%,
but most positions are unanimous and therefore free of noise, and the additional flips are
close to accuracy-neutral. Because the $m$-world construction keeps the secret space
enumerable, we can also estimate the leakage directly from posterior entropy rather than
probing it with an attack, and find it to be roughly 17\% of what the fixed accounting
charges. Against PMixED at matched membership-inference bounds on identical data, our
mechanism retains 98\% of its non-private headroom at every horizon from $10^2$ to $10^6$
tokens, while PMixED retains at most 56\%, falling to between 1\% and 3\% by $10^6$ depending on how its budget is read, with no crossover.

Our contributions are as follows. To our knowledge, we give the first instantiation of
PAC-private prediction for autoregressive generation, together with the analysis the setting requires:
an end-to-end guarantee on the released token stream, and the four supporting results just
described. We also measure the actual leakage. It is about 17\% of the charged privacy budget and remains nearly constant across a wide range of budgets and generation lengths. This suggests that fixed per-token accounting is roughly six times more conservative than the measured leakage. The separately measured dissent fraction is
numerically consistent with the explanation that unanimous positions are free, but the two
are measured on different protocols, since the session runs do not record per-token
unanimity.
We identify a measurement hazard specific to private generation --- degenerate decoding
makes an ensemble agree, so degeneracy silently understates the measured privacy cost ---
and show that coupling the worlds through shared randomness removes the degeneracy at no
change to the accounting, bringing repetition and diversity to within a fraction of a
percent of human text at 256 tokens. We separate inference privacy from content
protection, exhibiting a budget at which membership advantage on a forcibly memorized
canary is statistically indistinguishable from zero while the canary itself is still
emitted at essentially the same rate whether or not the realized world memorized it.
Finally, we compare against PMixED under matched provable bounds, matched data, and both
of its accounting conventions, and find no crossover at any query volume we evaluate.

%% file: sections/02_background.tex
\section{Background}
\label{sec:background}

\subsection{Threat Model and What Leaks}
\label{sec:threat-background}

The operational question behind every guarantee in this paper is whether an adversary who
sees a model's outputs can determine what was in its training data. The canonical
formalization is the \emph{membership inference} game \citep{shokri2017mia}: a record is
either included in the training set or not, the adversary observes the trained system, and
success is predicting that bit better than chance. Membership inference is a natural
target because it poses a minimal and directly measurable question about training-data
participation, and because it comes with a well-defined prior against which any
improvement can be quantified. Modern attacks calibrate per-example difficulty rather
than comparing raw confidences, which raises measured success substantially and makes
weakly-defended models look much worse than earlier evaluations suggested
\citep{carlini2022lira}. A more demanding objective on the same axis is reconstructing
training records outright \citep{balle2022reconstructing}. A related but distinct
black-box threat is model extraction, where the target is the model's own functionality
rather than its training data \citep{tramer2016stealing}.

A second and distinct failure mode is \emph{memorization}: a model reproducing training
text verbatim rather than merely revealing that it was trained on it. Language models
memorize measurably \citep{carlini2019secretsharer}, and extraction attacks recover
training sequences from deployed models \citep{carlini2021extracting}. The distinction
matters for how guarantees are read. A bound on membership inference constrains what an
adversary can \emph{infer} about the training set; it does not by itself prevent the model
from \emph{emitting} a memorized string, since a string can be produced for reasons that
have nothing to do with the private copy. We return to this separation in
Section~\ref{sec:canary-results}, where both are measured.

We assume throughout that the model is served behind an interface: the adversary submits
inputs and observes outputs, chosen adaptively, but does not see the weights. This is the
deployment mode of commercial language models \citep{flemings2024}, and, as the next
subsection argues, it is also where the standard privacy machinery is a poor fit.

\subsection{Differential Privacy and Its Limits for Language Models}
\label{sec:dp-background}
\paragraph{Definition and privacy accounting.}
Differential privacy~\cite{dwork2006dp,dwork2006calibrating} bounds the influence of any single record by requiring the output distribution to be nearly unchanged across adjacent datasets: a mechanism is $(\varepsilon,\delta)$-DP if, for all adjacent datasets and all measurable sets of outputs, the probability under one dataset is at most a factor $e^\varepsilon$ larger than under the other, plus $\delta$. The guarantee is worst-case over datasets, holds regardless of the adversary's prior information, and degrades gracefully under composition, which is what makes it usable for iterative algorithms~\cite{dwork2014foundations}.

Because direct composition of $(\varepsilon,\delta)$-DP guarantees can yield loose bounds, practical accounting is often performed using alternative privacy representations with simpler or tighter composition rules. R\'enyi differential privacy tracks a R\'enyi divergence of order $\alpha$~\cite{mironov2017rdp}, zero-concentrated DP controls privacy loss through R\'enyi-divergence bounds that imply sub-Gaussian concentration~\cite{bun2016zcdp}, and $f$-DP characterizes privacy through a hypothesis-testing trade-off function, while Gaussian DP specializes this framework to the trade-off between shifted Gaussian distributions~\cite{dong2022gdp}. These are privacy definitions in their own right, each with its own semantics, but in machine learning they are often used for privacy accounting because their composition rules are simpler or tighter; when an $(\varepsilon,\delta)$-DP guarantee is required, the accumulated guarantee can then be converted at the end. The choice is not neutral in practice: that conversion is where a method's reported budget is fixed, and different readings of the same accountant can differ substantially. This matters directly in Section~6.5, where the baseline we compare against spends its budget in RDP and we report the comparison under two conversion conventions.

\paragraph{Private training, and three obstacles.}
The dominant way to obtain a DP model is to privatize training: clip per-example gradients
to bound each sample's contribution to the update, and add calibrated noise at each step
\citep{song2013sgd,abadi2016dpsgd}. Once training is complete, the released weights carry a
DP guarantee, and any subsequent inference is post-processing. Applied to language models,
three difficulties arise.

First, privacy calibration in DP-SGD requires controlling \emph{sensitivity}, the worst-case
change induced by replacing one training record. Computing sensitivity tightly is NP-hard in
general, and tight sensitivity bounds remain unavailable for generic non-convex optimization
\citep{xiao2023pac}. DP-SGD therefore \emph{imposes} a sensitivity bound by clipping
per-example gradients rather than deriving a tight bound for the training procedure
\citep{abadi2016dpsgd,xiao2023pac}. Instance-specific refinements such as smooth sensitivity
exist \citep{nissim2007smooth}, but they do not by themselves resolve the sensitivity analysis
of iterative non-convex training procedures.

Second, the mechanics are expensive at scale: per-example gradients raise memory and time
costs well above ordinary training \citep{flemings2024}. Third, and most consequential for
our setting, DP-SGD privatizes \emph{weights}, which means it defends against an adversary
holding the parameters --- while a deployed language model exposes only its outputs.
Defending against white-box access that the deployment does not grant means paying for a
threat model stronger than the one that applies \citep{flemings2024,zsd}.
Parameter-efficient private fine-tuning reduces the computational and memory cost of DP
training \citep{yu2022dpfinetune}, but it does not remove the underlying threat-model
mismatch.

\subsection{Private Prediction}
\label{sec:private-prediction-background}

\paragraph{The paradigm.}
If only outputs are exposed, privacy can instead be enforced at the prediction
interface. \citet{dwork2018prediction} formalized this setting as
\emph{private prediction}: rather than releasing a model trained under DP, users
access the model only through an interface whose predictions satisfy differential
privacy. For learning problems with sufficiently stable predictions, privacy can
then be calibrated to the sensitivity of the predictions themselves, potentially
avoiding some of the overhead associated with releasing a fully private model.

The corresponding cost is a change in how the budget is spent. For a DP-trained model the
guarantee is discharged once, at training time, and serving predictions from the released
weights is post-processing that consumes nothing further. For output-private prediction it
is the releases themselves that are privatized, so budget is consumed as predictions are
answered, and the guarantee bounds how many a deployment can serve --- the trade-off
analysed by \citet{vandermaaten2020tradeoffs}. This is what makes composition the central
question for our setting rather than a secondary concern. A generated passage is not one
release but hundreds: whatever is spent per token is multiplied by the number of tokens
served, and each released token becomes part of the context for the next query, so the
relevant quantity is how leakage accumulates over a long and inherently adaptive
interaction.

\paragraph{Ensembles and noisy aggregation.}
The dominant construction trains an ensemble on disjoint shards of the private data and
releases a noisily aggregated vote, so that no single record can move the answer much.
PATE \citep{papernot2017pate} introduced this for classification and transfers the
ensemble's knowledge to a student model trained on public data. Its refinement
\citep{papernot2018scalable} adds a Gaussian aggregation mechanism together with a
data-dependent analysis under which strong teacher consensus yields a substantially lower
privacy cost, and a confident aggregator that declines to answer queries whose vote is not
decisive; both are what make the approach practical beyond small benchmarks. The broader
pattern --- that agreement among models is what makes a release cheap --- recurs in this
literature, and in a different form in ours.

\paragraph{Language models.}
Representative systems for language-model private prediction include SubMix and PMixED.
SubMix \citep{ginart2022submix} fine-tunes an ensemble on disjoint parts of a private
corpus and mixes the ensemble's next-token distribution with that of a public pre-trained
model, with a mixing weight tuned to the degree of consensus among ensemble members: where
the models agree, the released distribution can safely follow them. It provides a tight,
data-dependent accountant under a relaxation of group DP for next-token prediction. PMixED
\citep{flemings2024} gives a per-token protocol with a standard DP guarantee: each
expert's output distribution is projected toward the public model within a bounded
divergence, the projected distributions are averaged, and the budget is tracked in RDP
with Poisson subsampling amplification and converted at the end. It is the closest DP
private-prediction baseline to our setting, and the one we evaluate against in
Section~\ref{sec:pmixed-results}.

PMixED's accounting has a structural consequence worth naming, because it governs that
comparison. The privacy of a release comes from moving the output toward the public model,
and the required amount of movement is set by the budget available per token. Under a
fixed total privacy target spread over a horizon of $T$ tokens, that per-token budget
shrinks as $T$ grows, the projection tightens, and the private experts contribute less.
The mechanism does not degrade because the models get worse; it degrades because the
accountant demands it. SubMix's data-dependent accounting spends according to realized
consensus rather than a horizon fixed in advance, so this particular characterization
should not be read across to it. AdaPMixED extends PMixED with data-dependent accounting and an adaptive
screening mechanism, reducing the realized privacy cost on suitable
queries \citep{flemings2024adaptive}. Unlike PMixED, its privacy cost
depends on the realized query sequence. Our matched comparison therefore
uses PMixED, whose privacy guarantee is fixed before deployment, while
we discuss AdaPMixED as a complementary adaptive alternative.

\paragraph{Other routes for language models.}
Private prediction is not the only option when the private data need not be trained on
directly. Where the private corpus can be used through the prompt rather than the weights,
one can generate synthetic demonstrations under DP and use those for in-context learning
\citep{tang2024icl}, which moves the guarantee to the data-generation step. These
approaches are complementary to ours: they privatize what enters the context, whereas we
privatize what leaves the decoder.

\subsection{PAC Privacy}
\label{sec:related-pac}

\paragraph{Instance-based privacy from simulation.}
PAC privacy \citep{xiao2023pac} departs from the indistinguishability tradition on two
counts. It asks a different question --- not how distinguishable $M(X_0)$ and $M(X_0')$
are for adjacent inputs, but how hard it is to \emph{reconstruct} $X$ from the release ---
and it answers it relative to an assumed input distribution rather than a worst case over
inputs. Formally, an adversary's posterior success at any inference criterion is bounded
through an $f$-divergence between the joint and product distributions of $(X, M(X))$,
which for the KL case reduces to the mutual information $I(X; M(X))$. Two consequences
drive everything downstream. First, the guarantee is attack-agnostic: a single
mutual-information budget constrains every inference task at once, given only its prior.
Second, and this is what makes the framework operational, the required noise is
determined by the \emph{covariance} of the output rather than by its worst-case
sensitivity, so it can be estimated by Monte Carlo simulation of a black-box mechanism
with a stated confidence. Where differential privacy needs a sensitivity bound that is
NP-hard in general and unknown for non-convex training, PAC privacy substitutes
simulation for analysis. The utility consequence is that noise scales with
$\sum_j \sqrt{\lambda_j}$ over the output covariance spectrum rather than with the
ambient dimension: a mechanism whose output concentrates in a low-rank subspace is cheap
to privatize, which is precisely the regime a stable prediction interface occupies. A
recent overview \citep{xiao2025blackbox} gives a compact statement of this position.

\paragraph{From definition to deployable mechanism.}
\citet{sridhar2025pacalgorithms} supply the missing engineering: an efficient simulation
procedure that determines \emph{anisotropic} noise for a target budget, applied to
K-Means, SVM, PCA and random forests, with the resulting guarantees checked against
empirical attacks. Two of their findings recur in our setting. Anisotropic calibration is
substantially better than isotropic noise when the output varies unevenly across
directions, and algorithmic stability --- including stability induced deliberately, by
regularization --- translates directly into privacy amplification. This is the same
mechanism by which cross-world agreement makes our per-token releases inexpensive.

\paragraph{Adaptivity, and why it is the hard part.}
Both of the above concern a single release, or a batch of releases fixed in advance. An
interactive service does not work that way: a user observes a response and chooses the
next query in light of it. Two distinct difficulties are folded into that observation, and
the lineage separates them.

\citet{sridhar2026pacdb} address the first. They identify adaptivity as the obstacle to
applying PAC privacy to databases --- prior results would require all queries to be
declared \emph{a priori}, which they call a fundamental usability limitation --- and prove
that the budget composes linearly across adaptively and even adversarially generated
queries, provided the input for each query is sampled \emph{independently}. Their
privatization layer realizes this by resampling the input for each output cell, so that
the query sequence may adapt while the secret behind each response does not persist.
This is the right model for a database that repeatedly draws fresh subsets, and it is not
the model of a served language model: our secret world is drawn once and answers every
token of the interaction.

For a genuinely persistent secret, composition was possible before \citet{zsd} but costly.
The posterior-oblivious result of \citet{xiao2024thesis} calibrates each step against the
prior, and, as \citet{zsd} analyse, its budget then either grows quadratically in the
number of releases or, if the per-step noise is set to the input-independent worst case,
retreats to a DP-like level that discards the instance-based advantage the framework
exists to exploit. This is the second difficulty, and it is ours.

\citet{zsd} resolve it. Their insight is that if the adversary tracks a posterior over the
secret, the curator must do so too: calibrating each release against the prior protects
only an adversary who has observed nothing. Their mechanism maintains the exact posterior
as a belief state, calibrates each step's noise to that belief, and updates it by Bayes'
rule, under which mutual information accumulates linearly in the number of releases even
though a single secret persists and the queries are chosen adaptively by an adversary.
They instantiate this for classification by restricting the secret space to $m$ subsets of
a data universe with each record in exactly $m/2$ of them, which keeps the covariance and
the belief update exactly computable while fixing the membership prior at $1/2$; they
report 87.79\% CIFAR-10 accuracy at a per-query budget of $2^{-32}$, bounding
membership-inference success below 51.08\% after $10^6$ queries, and distil private
predictions into a publishable student model. Theirs is the composition result our
mechanism uses directly, since an autoregressive interaction is precisely a long sequence
of adaptive queries against one persistent secret; Section~\ref{sec:prelim} states the
parts we rely on.

\paragraph{Scope of the framework.}
PAC privacy has been applied beyond classification: to generative image models through
private classifier guidance in the sampling process \citep{xu2024pacdiffusion}, and to
settings where the perturbation
itself is constrained, such as one-sided bounded noise for side-channel obfuscation
\citep{xiao2025onesided}. Related lines ask what can be proved about encodings that
preserve learnability \citep{xiao2024encoding} and how a PAC-privacy claim can be
verified by a third party without trusting the curator, via zero-knowledge proofs
\citep{repetto2026zkpac}. The diffusion work is a prior instance of PAC privacy applied to a generative model, though it privatizes attributes of sampled images rather than a sequential release interface, so the persistent-secret
adaptive release-composition problem that dominates our setting is not
studied there.

\paragraph{Known slack in the Gaussian surrogate.}
The automation of PAC privacy rests on a specific device: bounding mutual information by a
Gaussian expression in the output covariance, which is what makes the required noise
estimable from simulation. That expression is an upper bound on the mutual information,
not an identity. \citet{zhang2026rpac} characterize when it is tight --- only when the
unperturbed output is Gaussian and the added noise is independent Gaussian --- so for
other output distributions some budget is spent without being needed. They propose tighter
post-processing based on Donsker--Varadhan and sliced-Wasserstein representations, and
then a residual measure of the privacy remaining after adversarial inference, implemented
by selecting noise distributions through bilevel optimization. The observation concerns
the covariance-based Gaussian surrogate rather than the PAC privacy definition, which is
stated in terms of mutual information directly. We use the standard Gaussian calibration
throughout, so this slack is present in our mechanism too; it compounds with a second and
unrelated conservatism we quantify in Section~\ref{sec:privacy-results}, where the charged
budget exceeds the measured leakage by roughly a factor of six for reasons specific to
fixed-budget accounting.

\subsection{Preliminaries}
\label{sec:prelim}

This subsection collects the definitions and results from PAC privacy
\citep{xiao2023pac} and its private-prediction instantiation \citep{zsd} that the rest of
the paper uses. All mutual-information quantities are in nats.

\paragraph{Setup.}
A sensitive input $S$ is drawn from a distribution $P_S$ over a domain $\mathcal{S}$, and
a mechanism $M$ releases $M(S)$. An adversary observes the release and attempts an
inference task specified by a binary criterion $\rho(\hat{S}, S) \in \{0,1\}$, which
equals one when the adversary's estimate $\hat{S}$ counts as a success. Unlike
worst-case definitions, which quantify how distinguishable $M(S_0)$ and $M(S_0')$ are for
adjacent inputs, PAC privacy is stated directly in terms of how hard inference is under a
given input distribution.

\begin{definition}[PAC privacy \citep{xiao2023pac,zsd}]
\label{def:pac}
Let $S \sim P_S$ and let $M : \mathcal{S} \to \mathcal{R}$ be a possibly randomized
mechanism. $M$ is $(\delta, \rho, P_S)$-PAC private if for every informed adversary
$\mathcal{A}$ that knows $(P_S, M)$, observes $R = M(S)$, and outputs
$\hat{S} = \mathcal{A}(R)$,
\[
  1 - \delta_{\mathcal{A}} \;:=\; \Pr\big[\rho(\hat{S}, S) = 1\big] \;\le\; 1 - \delta ,
\]
the probability being over $S \sim P_S$, the randomness of $M$, and the adversary.
\end{definition}

The guarantee is relative to what the adversary could already do without the release.

\begin{definition}[Optimal prior success rate \citep{xiao2023pac}]
\label{def:prior}
For a criterion $\rho$ and distribution $P_S$, the optimal prior success rate is
$1 - \delta_0^{\rho} := \sup_{\hat{S}} \Pr_{S \sim P_S}[\rho(\hat{S}, S) = 1]$, the best
an adversary can do knowing only $P_S$.
\end{definition}

\paragraph{From mutual information to attack success.}
The central quantity is the mutual information between the secret and the release, which
bounds the adversary's improvement over the prior for \emph{every} criterion $\rho$
simultaneously.

\begin{theorem}[Posterior advantage bound \citep{xiao2023pac}]
\label{thm:mi-to-attack}
For any criterion $\rho$, the posterior success rate $1 - \delta_{\mathcal{A}}$ of an
informed adversary satisfies
\[
  D_{\mathrm{KL}}\big(\mathbf{1}_{\delta_{\mathcal{A}}} \,\|\,
  \mathbf{1}_{\delta_0^{\rho}}\big)
  \;\le\; I\big(S; M(S)\big),
\]
where $\mathbf{1}_{p}$ denotes a Bernoulli distribution with parameter $p$. In
particular, by Pinsker's inequality the posterior advantage obeys
$\delta_0^{\rho} - \delta_{\mathcal{A}} \le \sqrt{\tfrac{1}{2} I(S; M(S))}$.
\end{theorem}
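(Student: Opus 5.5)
The plan is to reduce the statement to the data-processing inequality for KL divergence, applied to the binary "success" indicator. Throughout, $R = M(S)$ denotes the release and $\hat S = \mathcal{A}(R)$ the adversary's estimate. We introduce the success bit $Z := \rho(\hat S, S) \in \{0,1\}$.

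We compare two joint laws of $(S, R)$ with the same marginals:
\begin{itemize}
\item the true joint $P_{S,R}$;
\item the product $P_S \otimes P_R$, i.e.\ $R$ is drawn independently of $S$.
\end{itemize}
Applying the same measurable (possibly randomized) map $(S,R) \mapsto \rho(\mathcal{A}(R), S)$ to both laws gives Bernoulli distributions for $Z$.
\begin{itemize}
\item Under the joint, $Z$ is Bernoulli with success probability $1-\delta_{\mathcal{A}}$. Equivalently, its failure probability is $\delta_{\mathcal{A}}$, matching the convention $\mathbf{1}_{\delta_{\mathcal{A}}}$.
\item Under the product, $\hat S$ is independent of $S$. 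Its success probability is therefore $\mathbb{E}_{\hat S}\Pr_{S\sim P_S}[\rho(\hat S,S)=1] \le 1-\delta_0^{\rho}$ by Definition~\ref{def:prior}. Call this success probability $1-q$, so $q \ge \delta_0^{\rho}$.
\end{itemize}
Data processing for KL then yields
\[
D_{\mathrm{KL}}(\mathbf{1}_{\delta_{\mathcal{A}}}\,\|\,\mathbf{1}_{q}) \le D_{\mathrm{KL}}(P_{S,R}\,\|\,P_S\otimes P_R) = I(S;R).
\]
Any randomness of $\mathcal{A}$ is handled by including an independent seed in the map, which leaves the inequality unchanged.

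The main obstacle is replacing $q$ by $\delta_0^{\rho}$, since in general we only know $q \ge \delta_0^{\rho}$. We split into two cases.
\begin{itemize}
\item If $\delta_{\mathcal{A}} \ge \delta_0^{\rho}$, the adversary does no better than the prior. In the direction of interest the claim is then vacuous, or it holds because the advantage is nonpositive.
\item If $\delta_{\mathcal{A}} < \delta_0^{\rho} \le q$, we use that $p' \mapsto D_{\mathrm{KL}}(\mathbf{1}_{p}\|\mathbf{1}_{p'})$ is nondecreasing for $p' \ge p$. Its derivative in $p'$ is $(p'-p)/(p'(1-p'))$. Hence $D_{\mathrm{KL}}(\mathbf{1}_{\delta_{\mathcal{A}}}\|\mathbf{1}_{\delta_0^{\rho}}) \le D_{\mathrm{KL}}(\mathbf{1}_{\delta_{\mathcal{A}}}\|\mathbf{1}_{q}) \le I(S;R)$.
\end{itemize}
As stated, the theorem holds in the meaningful regime of positive advantage, which is the regime of interest; this is how \citet{xiao2023pac} read it.

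For the Pinsker corollary, apply Pinsker's inequality to the two Bernoulli laws. Their total variation distance is $|\delta_0^{\rho}-\delta_{\mathcal{A}}|$, which gives $2(\delta_0^{\rho}-\delta_{\mathcal{A}})^2 \le D_{\mathrm{KL}} \le I(S;M(S))$. Rearranging gives the stated square-root bound.
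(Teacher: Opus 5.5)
Your proof is correct and is the standard argument behind this result, which the paper imports from \citet{xiao2023pac} without reproving: data processing of the KL divergence through the success indicator, comparing the joint law against the product coupling $P_S\otimes P_R$, followed by monotonicity of $p'\mapsto D_{\mathrm{KL}}(\mathbf{1}_{\delta_{\mathcal{A}}}\|\mathbf{1}_{p'})$ for $p'\ge\delta_{\mathcal{A}}$ to pass from $q$ to $\delta_0^{\rho}$. One sharpening: when $\delta_{\mathcal{A}}>\delta_0^{\rho}$ the KL inequality is not vacuous but can actually fail (e.g.\ a constant mechanism, so $I(S;M(S))=0$, with an adversary that deliberately guesses worse than the best prior guess), so the KL form genuinely needs the hypothesis $\delta_{\mathcal{A}}\le\delta_0^{\rho}$, which the paper's statement omits, whereas the Pinsker corollary holds unconditionally, as you note.
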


Two consequences matter here. The bound is attack-agnostic: it depends only on $P_S$ and
$M$, so enforcing a mutual-information budget $I(S; M(S)) \le B$ constrains every
inference attack at once, given only its prior. And converting a budget into a concrete
guarantee requires just two numbers, the prior and the budget --- which is why the
guarantee tables of \citet{zsd} transfer to any instantiation with the same priors.

\paragraph{Calibrating noise to a budget.}
For deterministic mechanisms, Gaussian noise calibrated to the \emph{covariance} of the
output enforces a target budget.

\begin{theorem}[Noise determination \citep{xiao2023pac,zsd}]
\label{thm:noise}
Let $M : \mathcal{S} \to \mathbb{R}^d$ be deterministic, let
$\operatorname{Var}(M(S)) = U \Lambda U^{\top}$ with
$\Lambda = \operatorname{diag}(\lambda_1, \dots, \lambda_d)$, and let $B > 0$. Define
\[
  \Lambda_B \;=\; \operatorname{diag}\!\left(
  \frac{\sqrt{\lambda_i} \sum_{j=1}^{d} \sqrt{\lambda_j}}{2B} \;:\; i = 1, \dots, d
  \right).
\]
Then $I(S; M(S) + Z) \le B$ for $S \sim P_S$ and independent
$Z \sim \mathcal{N}(0, U \Lambda_B U^{\top})$.
\end{theorem}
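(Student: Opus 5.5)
The plan is to reduce the statement to a scalar computation in the eigenbasis of $\operatorname{Var}(M(S))$. The tools are the data-processing inequality, the Gaussian maximum-entropy bound, and $\log(1+x)\le x$.

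First, since $M$ is deterministic, the chain $S \to M(S) \to M(S)+Z$ is Markov. Data processing then gives $I(S;M(S)+Z) \le I(M(S);M(S)+Z)$; in fact the two are equal, but the inequality suffices. Next, apply the orthogonal map $U^{\top}$ to both $M(S)$ and the release. Mutual information is invariant under this invertible map, so we may work with $X := U^{\top}M(S)$ and $W := U^{\top}Z$. Here $\operatorname{Var}(X)=\Lambda$ is diagonal and $W \sim \mathcal{N}(0,\Lambda_B)$ is diagonal with independent coordinates of variance $\sigma_i^2 = \sqrt{\lambda_i}\,\sum_j\sqrt{\lambda_j}/(2B)$.

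To avoid degeneracy, split the coordinates into the active set $A=\{i:\lambda_i>0\}$ and its complement. On the complement, $X_i$ is almost surely constant, so $X_i + W_i = X_i$ (with $\sigma_i=0$) is a known constant and carries no information. Hence $I(X;X+W)=I(X_A;X_A+W_A)$. If $A=\emptyset$ the mutual information is $0\le B$ and we are done. This degenerate case is also the situation the paper meets at unanimous positions.

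On $A$, the noise $W_A$ is a nondegenerate Gaussian, so $X_A+W_A$ has a density. Then
\[
I(X_A;X_A+W_A)=h(X_A+W_A)-h(W_A).
\]
Because $X_A$ and $W_A$ are independent, $\operatorname{Var}(X_A+W_A)=\Lambda_A+\Lambda_{B,A}$. The Gaussian maximizes differential entropy at fixed covariance, and Hadamard's inequality bounds the determinant by the product of the diagonal. Together these give
\[
I \le \tfrac12\sum_{i\in A}\log\!\Big(1+\frac{\lambda_i}{\sigma_i^2}\Big) \le \tfrac12\sum_{i\in A}\frac{\lambda_i}{\sigma_i^2}.
\]
Substituting $\sigma_i^2$ yields $\lambda_i/\sigma_i^2 = 2B\sqrt{\lambda_i}/\sum_j\sqrt{\lambda_j}$. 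Summing over $A$, and noting that the denominator sum is effectively over $A$ because the other terms vanish, the right-hand side equals exactly $B$.

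The only delicate step is the degenerate case. There $\Lambda_B$ is singular, so "$\mathcal{N}(0,U\Lambda_B U^{\top})$" must be read as a Gaussian supported on the active subspace. One also has to justify that differential entropies are defined and finite there. Restricting to $A$ before invoking max-entropy handles this. The remaining requirement is a finite second moment for $M(S)$, which is implicit in the covariance existing.
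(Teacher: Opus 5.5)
Your proposal is correct and follows essentially the same route as the cited proof the paper relies on: data processing, rotation into the eigenbasis, the Gaussian maximum-entropy bound, then $\log(1+x)\le x$, with the prescribed $\Lambda_B$ making the sum exactly $B$. Your reduction to the active coordinates matches how the paper handles singular covariances in the proof of Lemma~\ref{lem:singular}, and the appeal to Hadamard's inequality is harmless but unnecessary, since $\operatorname{Var}(X_A+W_A)=\Lambda_A+\Lambda_{B,A}$ is already diagonal.
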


The noise magnitude depends on $\sum_j \sqrt{\lambda_j}$ rather than on the output
dimension $d$: a mechanism whose output varies within a low-rank subspace needs noise
scaled to that rank, not to the ambient dimension. This is the structural difference from
differential privacy, where the required noise norm scales as $\Theta(\sqrt{d})$ at fixed
sensitivity \citep{xiao2023pac}, and it is what makes stable, low-rank outputs cheap to privatize. When
$\mathcal{S}$ is finite with $|\mathcal{S}| = m$, the covariance can be computed exactly
by evaluating $M$ on all $m$ inputs, at cost dominated by those $m$ evaluations
\citep{zsd}; when it is not, the covariance is estimated by Monte Carlo simulation, and
the budget then holds with a stated confidence \citep{xiao2023pac}.

It is convenient to name the map from a distribution, a mechanism, and a budget to a
noise covariance.

\begin{definition}[Noise calibration function \citep{zsd}]
\label{def:sigma}
$\Sigma : \mathcal{P}(\mathcal{S}) \times (\mathbb{R}^d)^{\mathcal{S}} \times
\mathbb{R}_{>0} \to \mathbb{R}^{d \times d}$ is a noise calibration function if for every
distribution $P$ over $\mathcal{S}$, deterministic $M$, and $B > 0$,
\[
  I_{S \sim P}\big(S;\, M(S) + \mathcal{N}(0, \Sigma(P, M, B))\big) \;\le\; B .
\]
Theorem~\ref{thm:noise} instantiates such a $\Sigma$.
\end{definition}

\paragraph{Composition under a persistent secret.}
Serving predictions means answering a stream of queries chosen by the user, possibly
adaptively and adversarially, against a secret that is drawn once and persists. Formally,
at step $t$ the adversary submits a mechanism $M_t : \mathcal{S} \to \mathbb{R}^{d_t}$
selected from the interaction history $H_{t-1} = \{(M_i, R_i)\}_{i<t}$ and its own
randomness, subject only to the absence of a side channel,
\begin{equation}
\label{eq:no-side-channel}
  M_t \perp S \mid H_{t-1},
\end{equation}
and the curator returns a noised release $R_t$. Calibrating each step's noise to the
prior $P_S$ is not sufficient here: it protects against an adversary who has observed
nothing, while a rational adversary tracks the posterior $P_{S \mid H_{t-1}}$. Prior
composition results for PAC privacy therefore either require non-adaptive queries, or
resample the secret at each step, or accumulate quadratically in $T$ \citep{zsd}.

\citet{zsd} resolve this by making the curator adapt as well. Their mechanism maintains a
belief state $P_t$, calibrates each release against the \emph{current} belief rather than
the prior, and updates the belief by Bayes' rule:
\begin{equation}
\label{eq:posterior-update}
\begin{aligned}
&\Sigma_t = \Sigma(P_{t-1},M_t,b_t),\\
&R_t = M_t(S)+Z_t,\qquad Z_t\sim\mathcal{N}(0,\Sigma_t),\\
&P_t(s)\propto P_{t-1}(s)\exp\!\left[
-\frac{1}{2}(R_t-M_t(s))^\top
\Sigma_t^{-1}(R_t-M_t(s))
\right].
\end{aligned}
\end{equation}

\begin{lemma}[Tracked belief is the true posterior \citep{zsd}]
\label{lem:tracked-posterior}
The belief state maintained by \eqref{eq:posterior-update} satisfies
$P_t = P_{S \mid H_t}$ at every step.
\end{lemma}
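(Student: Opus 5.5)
I will prove the claim by induction on $t$, with the base case $P_0 = P_S = P_{S\mid H_0}$ holding by construction, since the curator initializes its belief at the prior and $H_0$ is empty. For the inductive step, assume $P_{t-1} = P_{S\mid H_{t-1}}$. The goal is to compute the conditional law of $S$ given $H_t = (H_{t-1}, M_t, R_t)$ via Bayes' rule, factoring the joint conditional density as
\[
p(s, M_t, R_t \mid H_{t-1}) \;=\; P_{S\mid H_{t-1}}(s)\, p(M_t \mid s, H_{t-1})\, p(R_t \mid s, M_t, H_{t-1}).
\]

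\textbf{The query factor.} By the no-side-channel condition \eqref{eq:no-side-channel}, $p(M_t \mid s, H_{t-1}) = p(M_t \mid H_{t-1})$ does not depend on $s$. It therefore cancels upon normalization over $s$.

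\textbf{The release factor.} I would argue that $\Sigma_t = \Sigma(P_{t-1}, M_t, b_t)$ is a deterministic function of $(H_{t-1}, M_t)$. This holds because $P_{t-1}$ is computed from $H_{t-1}$ by the recursion, and $b_t$ is a fixed schedule or is also history-determined. Given $S=s$, $M_t$ and $H_{t-1}$, the curator draws $Z_t\sim\mathcal N(0,\Sigma_t)$ with fresh randomness, independent of everything else. Hence $R_t \mid (s, M_t, H_{t-1}) \sim \mathcal N(M_t(s), \Sigma_t)$. Its density is proportional, with an $s$-independent constant $(2\pi)^{-d/2}\det(\Sigma_t)^{-1/2}$, to
\[
\exp\!\left[-\tfrac12 (R_t - M_t(s))^\top \Sigma_t^{-1}(R_t - M_t(s))\right].
\]
Substituting both factors and the inductive hypothesis into Bayes' rule and normalizing over the finite $\mathcal S$ gives exactly the update \eqref{eq:posterior-update}. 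So $P_t = P_{S\mid H_t}$.

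\textbf{Main obstacle: degenerate $\Sigma_t$.} The delicate point is rigor when $\Sigma_t$ is singular, which will matter later for one-hot votes; it is also relevant when densities must be interpreted as Radon--Nikodym derivatives. In the stated form, $\Sigma_t^{-1}$ presumes invertibility, and I would prove the lemma under that assumption. I would then remark how it extends. In the degenerate case, the release law is supported on $M_t(s) + \mathrm{range}(\Sigma_t)$. I would take densities with respect to Lebesgue measure on that affine subspace and use the pseudoinverse. Worlds $s$ whose $M_t(s)$ lies off the affine subspace containing $R_t$ receive zero likelihood, and this is consistent because such events have probability zero under those $s$.

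A second subtlety is that the curator's own randomness in choosing $Z_t$ must not leak into $M_{t+1}$ except through $R_t$. This is guaranteed because the history records only $(M_i,R_i)$ and the adversary's independent coins. Those coins can be absorbed into $H_{t-1}$ or shown to be independent of $S$ given the history.
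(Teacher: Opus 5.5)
Your induction is correct and matches the argument the paper relies on. The paper imports this lemma from \citet{zsd} and repeats the same Bayes-rule step in its own setting in Proposition~\ref{prop:coupled}(iii) and Appendix~\ref{app:coupled}: the query factor cancels by the no-side-channel condition, and the Gaussian likelihood has an $s$-independent normalizer because $\Sigma_t$ is fixed by $(H_{t-1}, M_t)$.

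One small correction to your degenerate-case aside, which the lemma as stated (with $\Sigma_t^{-1}$) does not need. The pseudoinverse quadratic form by itself assigns positive weight to worlds with $R_t - M_t(s) \notin \operatorname{range}(\Sigma_t)$, so zeroing them requires an explicit indicator. In the paper's instantiation this never arises, because $\Pi_{\mathcal{A}^\perp} y_s$ is constant over the posterior support (Lemma~\ref{lem:singular}).
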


\begin{theorem}[Adversarial composition \citep{zsd}]
\label{thm:composition}
Under \eqref{eq:no-side-channel}, the releases produced by \eqref{eq:posterior-update}
with per-step budgets $\{b_t\}$ satisfy
$I(S; R_1, \dots, R_T) \le I(S; H_T) \le \sum_{t=1}^{T} b_t$.
\end{theorem}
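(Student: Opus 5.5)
The statement to prove is Theorem~\ref{thm:composition}: two inequalities, which I would treat separately. The first, $I(S;R_1,\dots,R_T)\le I(S;H_T)$, is data processing. $H_T=\{(M_i,R_i)\}_{i\le T}$ contains $(R_1,\dots,R_T)$ as a deterministic projection, so $S\to H_T\to (R_1,\dots,R_T)$ is a Markov chain. The whole argument therefore concentrates on the second inequality, which I would prove by the chain rule along the interaction history.

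Write $H_t=(H_{t-1},M_t,R_t)$ and expand
\[
I(S;H_T)=\sum_{t=1}^T I(S;M_t,R_t\mid H_{t-1})
=\sum_{t=1}^T \big[I(S;M_t\mid H_{t-1})+I(S;R_t\mid H_{t-1},M_t)\big].
\]
The first term in each summand vanishes exactly by the no-side-channel condition \eqref{eq:no-side-channel}. It remains to show $I(S;R_t\mid H_{t-1},M_t)\le b_t$. Since conditional mutual information is an average over the conditioning variable, it suffices to fix a realization $(h_{t-1},M_t)$ and bound the mutual information under the conditional law.

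Under that conditioning, three facts hold:
\begin{itemize}
\item The secret has law $P_{S\mid H_{t-1}=h_{t-1},M_t}$. Again by \eqref{eq:no-side-channel} this equals $P_{S\mid H_{t-1}=h_{t-1}}$.
\item By Lemma~\ref{lem:tracked-posterior}, that law equals the curator's belief $P_{t-1}$.
\item $M_t$ is a fixed deterministic map, and $R_t=M_t(S)+Z_t$ with $Z_t\sim\mathcal N(0,\Sigma(P_{t-1},M_t,b_t))$. The noise $Z_t$ is fresh, so it is independent of $S$ given the history.
\end{itemize}
This is exactly the setting of Definition~\ref{def:sigma} with $P=P_{t-1}$, which gives $I(S;R_t\mid H_{t-1}=h_{t-1},M_t)\le b_t$. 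Averaging over the conditioning and summing over $t$ yields $\sum_t b_t$.

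The main obstacle is the step identifying the conditional law of $S$ with $P_{t-1}$. If Lemma~\ref{lem:tracked-posterior} could be invoked freely, the argument would be routine. If it needed proof, I would argue by induction on $t$. Given $P_{t-1}=P_{S\mid H_{t-1}}$, condition on $M_t$ (free, by the side-channel assumption). The likelihood of $R_t$ given $S=s$ is then the Gaussian density centered at $M_t(s)$ with covariance $\Sigma_t$, so Bayes' rule reproduces the update in \eqref{eq:posterior-update}. The subtlety is a singular $\Sigma_t$, for example under unanimity. In that case I would restrict to the support subspace of $\Sigma_t$, on which the density is well defined, and use the pseudoinverse there. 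Directions outside that subspace carry $M_t(s)$ values that are equal across the support of $P_{t-1}$, so they contribute no likelihood ratio.

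A second point to check is the adversary's own randomness in choosing $M_t$. I would include it in $H_{t-1}$, or note that it is independent of $S$, so it does not break the conditional independence.
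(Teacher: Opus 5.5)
Your proof is correct and follows the same route the paper uses (in its appendix proof of Theorem~\ref{thm:main}, which extends this cited result to coupled decoding): data processing for the first inequality, then the chain rule over $H_T$. In that chain-rule step the query term vanishes by \eqref{eq:no-side-channel}, and the release term is bounded by the calibration applied under the belief $P_{t-1}$. You also make explicit the identification $P_{S\mid H_{t-1},M_t}=P_{t-1}$, which the paper only uses implicitly; your singular-covariance aside holds for the covariance-based calibration of Theorem~\ref{thm:noise} (cf.\ Lemma~\ref{lem:singular}), though not for an arbitrary calibration function, and the statement as written assumes $\Sigma_t$ invertible anyway.
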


Accumulation is linear rather than sublinear because mutual information is an expected KL
divergence and composes additively by the chain rule; the same is true of the parameters
of Rényi and concentrated DP, which appear sublinear only after conversion to
$(\varepsilon, \delta)$-DP \citep{zsd}. Theorem~\ref{thm:composition} equivalently bounds
$\mathbb{E}[D_{\mathrm{KL}}(P_{S \mid H_T} \,\|\, P_S)]$, the expected refinement of the
adversary's belief over the prior.

\paragraph{Instantiation: membership inference and the prior floor.}
The criterion $\rho$ specialises to membership inference by taking $P_S$ to be a
distribution over subsets of a finite universe $U$ and asking the adversary to predict
whether a given record lies in the realized subset \citep{xiao2023pac}. \citet{zsd}
make the secret space tractable by restricting it to $m$ subsets
$\mathcal{S} = \{S_1, \dots, S_m\}$ with $P_S$ uniform, assigning each record to exactly
$m/2$ of them; every record then has membership prior exactly $1/2$, matching Poisson
subsampling at rate $0.5$ on the marginal, while keeping exact covariance evaluation and
exact belief updates feasible. Two priors follow: $1/2$ for membership of any individual
record, and, by the following bound, $1/m$ for recovery of the secret itself.

\begin{proposition}[Prior floor \citep{zsd}]
\label{prop:prior-floor}
Let $P_S$ be uniform over $m$ subsets. For exact identification of the secret, the
optimal prior success rate is exactly $1 - \delta_0^{\rho} = 1/m$. More generally, for
any criterion under which correctly identifying the secret counts as a success, the
optimal prior success rate satisfies $1 - \delta_0^{\rho} \ge 1/m$.
\end{proposition}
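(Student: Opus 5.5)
The plan is to prove the two claims in turn, by comparing against the constant adversary that ignores the release.

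\emph{Exact identification.} Take the criterion $\rho(\hat S,S)=\mathbf{1}[\hat S=S]$. For any fixed guess $\hat S=s$, independence of the guess from $S$ gives
\[
  \Pr_{S\sim P_S}[\rho(s,S)=1]=P_S(s)=1/m .
\]
If the guess lies outside $\mathcal{S}$, the success probability is $0$.

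A prior-only adversary may also randomize its guess, drawing $\hat S\sim Q$ independently of $S$. Its success probability is then
\[
  \sum_s Q(s)P_S(s)=1/m ,
\]
since the average of the constant $1/m$ under any $Q$ is $1/m$. Taking the supremum in Definition~\ref{def:prior} over deterministic and randomized guesses gives exactly $1/m$. This uses only uniformity; the $m/2$ membership structure plays no role.

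\emph{General criterion.} Now suppose $\rho$ satisfies $\rho(s,s)=1$ for every $s\in\mathcal{S}$. Fix any $s_1\in\mathcal{S}$ and let the adversary output $\hat S=s_1$ deterministically. Then
\[
  \Pr[\rho(s_1,S)=1]\ge \Pr[S=s_1]=1/m ,
\]
because the event $\{S=s_1\}$ is contained in the success event. The supremum in Definition~\ref{def:prior} is at least this value, so $1-\delta_0^{\rho}\ge 1/m$.

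There is no substantive obstacle. The only point needing care is making explicit what class of prior-only adversaries the supremum ranges over. I would note that randomized guesses are convex combinations of deterministic ones, so allowing them cannot exceed the deterministic supremum. That makes the value $1/m$ exact in the identification case, rather than merely a lower bound.
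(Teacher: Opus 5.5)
Your proof is correct. The paper gives no proof of its own here (it imports the result from \citet{zsd}), and your constant-guess comparison is the standard argument: the supremum in Definition~\ref{def:prior} is exactly $1/m$ for exact identification, and is bounded below by the guess $\hat S=s_1$ for a general criterion. Two small tightenings, neither of which changes the conclusion. First, $\sum_s Q(s)P_S(s)$ equals $Q(\mathcal{S})/m\le 1/m$, rather than exactly $1/m$, when $Q$ puts mass on guesses outside $\mathcal{S}$. Second, $P_S(s)=1/m$ presumes that the $m$ subsets are distinct, or that the secret is the world index as in Section~\ref{sec:threat-model}.
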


Combining Theorem~\ref{thm:composition} with Theorem~\ref{thm:mi-to-attack} at these two
priors converts a per-step budget $b$ and a horizon $T$ into concrete bounds on
attack success. \citet{zsd} tabulate these, and since they are functions of
$(b, T, \text{prior})$ alone, they apply unchanged to any mechanism satisfying the same
composition guarantee at the same priors --- including ours, once $T$ is reinterpreted
as a count of released tokens (Section~\ref{sec:theory}).

%% file: sections/03_mechanism.tex
\section{PAC-Private Autoregressive Generation}
\label{sec:mechanism}

Our objective is to serve autoregressively generated text from a language model adapted
to a private corpus while placing a provable bound on the success of membership
inference by an informed adversary who observes the released token stream and knows the
mechanism, training procedure, and candidate worlds, but not which world is realized.
PAC privacy is well suited to this setting because it privatizes model outputs rather
than model weights and calibrates noise to how much the output changes across possible
secrets. The challenge in autoregressive generation is that this variation must be
evaluated online at every token, under a large vocabulary and a query stream that is
itself determined by previous releases.

We make the secret space finite and directly evaluable at serving time. The private
corpus is assigned to $m=128$ overlapping subsets, or \emph{worlds}, with each record
appearing in exactly $m/2=64$ worlds. One adapted model is trained per world, and the
secret is the index of the realized world. At each token, all worlds are evaluated on
the current context. Their votes specify how the mechanism would behave under each
possible secret, which determines the posterior-weighted vote covariance and therefore
the PAC-calibrated noise. When all worlds vote identically, the covariance is zero and
no calibration noise is required, although the fixed accounting still charges the
per-token budget $b$. To support sequential interaction against an adaptive adversary,
the curator tracks a posterior over worlds and recalibrates at every step.

\begin{figure}[H]
    \centering
    \makebox[\textwidth][c]{%
        \includegraphics[width=1.08\textwidth]{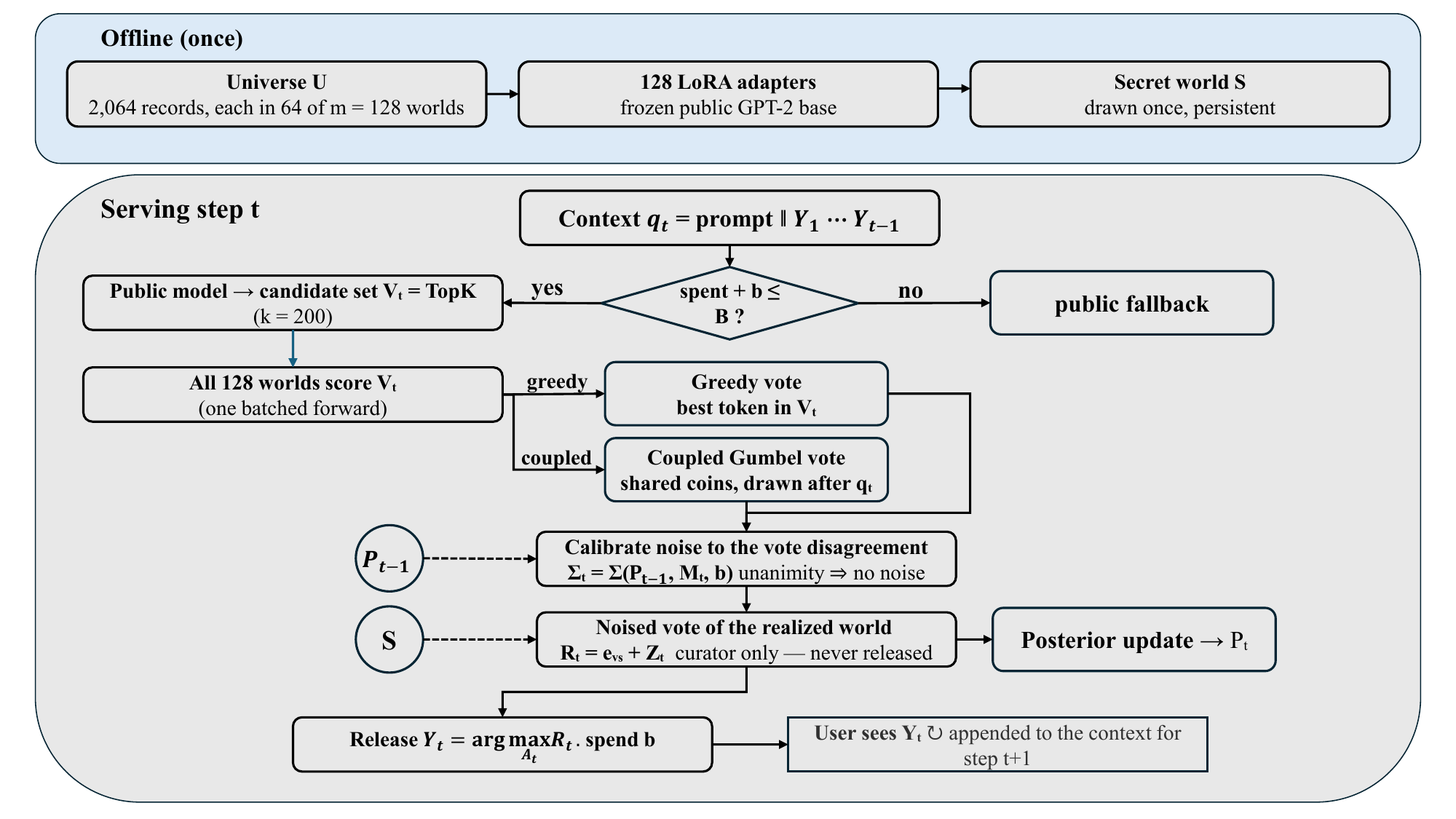}
    }
    \caption{\textbf{PAC-private autoregressive generation.}
    Offline, the private universe is assigned to $m=128$ overlapping worlds,
    with each record appearing in exactly 64 worlds, and one LoRA adapter is
    trained per world on a frozen public GPT-2 base. At serving time  while the privacy budget permits, the
    public model defines the candidate set $V_t$, and all 128 worlds vote
    within this set using either greedy decoding or shared-randomness
    Gumbel sampling. Dashed arrows mark the two quantities entering a serving step from
    outside it: the current posterior $P_{t-1}$ and the realized secret $S$.
    Their posterior-weighted disagreement determines the
    PAC-calibrated noise. The curator adds this noise to the realized
    world's vote to form the internal vector $R_t$, updates the posterior
    over worlds using $R_t$, and releases only
    $Y_t=\arg\max_{v\in A_t}(R_t)_v$. The released token is appended to the
    context for the next autoregressive step.}
    \label{fig:mechanism}
\end{figure}

Moving from classification to autoregressive generation introduces four additional
issues: the 50,257-token output space, autoregressive feedback from released tokens into
future queries, the fact that accounting is defined on an internal noised vector while
the user receives only a discrete token, and the need for sampled decoding without
creating artificial disagreement between otherwise similar worlds. The mechanism below
addresses these with public top-$k$ support, posterior-aware sequential calibration,
token-only post-processing, and shared-randomness Gumbel decoding, respectively.
Figure~\ref{fig:mechanism} summarizes the complete pipeline.

Throughout, $S\sim P_S$ denotes the secret, $M_t$ the query mechanism at step $t$,
$R_t$ the internal vector release, $H_t=\{(M_i,U_i,R_i))\}_{i=1}^t$ the interaction
history (where $U_i$ is absent under greedy decoding), $\Sigma(\cdot,\cdot,\cdot)$ the PAC noise-calibration function, and $b$ the
per-step mutual-information budget in nats.

\subsection{Threat Model and Secret}
\label{sec:threat-model}

Let $U=\{u_1,\ldots,u_n\}$ be a private corpus of $n=2{,}064$ records: 2,048
WikiText-103 articles and 16 canary records used later for memorization tests. Following
the $m/2$ subsampling construction, we form $m=128$ worlds
$\mathcal{S}=\{S_1,\ldots,S_m\}$ by assigning every record independently to exactly
$m/2=64$ worlds chosen uniformly at random. The secret is
\[
S\sim P_S=\mathrm{Uniform}(\mathcal{S}),
\]
so every record has membership prior
\[
\Pr[u\in S]=\frac{1}{2}.
\]
Under the uniform world prior, exact world identification has prior success
$1/m=1/128$; PAC privacy bounds how much this success can increase after observing the
released interaction.

The adversary is informed and adaptive: it knows $U$, $P_S$, the training pipeline,
all $m$ trained models, and the mechanism. Its uncertainty is only the realized world
index. It may choose each query as a function of the previously observed history,
subject to the no-side-channel condition
\[
M_t \perp S \mid H_{t-1}.
\]
The secret is persistent: one world is drawn once and remains the realized world for
the entire interaction. Fresh public randomness used at step $t$ is generated only
after the current query has been fixed.

Record membership is a derived binary quantity,
$\mathbf{1}[u\in S]$, obtained by summing posterior mass over the 64 worlds containing
record $u$. This is coarser than identifying the exact world; the empirical behavior of
record-level MIA and world identification is evaluated separately in the results.

The balanced $64/128$ assignment is also useful for controlled memorization tests. Under
the initial uniform posterior, if a record were memorized perfectly by all and only its
member worlds, the induced binary vote split would carry equal prior mass on the two
membership hypotheses. Section~\ref{sec:canary-results} tests this case explicitly using
controlled canary memorization.

\subsection{Worlds as LoRA Adapters}
\label{sec:world-construction}

Training 128 full language models from scratch is impractical. Instead, each world is a
LoRA adapter on a frozen public GPT-2-small base. All adapters are trained with the same
pipeline on their assigned subsets; the detailed architecture and hyperparameters are
reported in Section~\ref{sec:setup}. We write
$p_s(\cdot\mid q)$ for the next-token distribution produced by the adapter associated
with world $s$ on context $q$, and $p_{\mathrm{pub}}$ for the public base model.

This shared-base construction makes the 128 worlds inexpensive enough to evaluate
jointly at serving time. The resulting cross-world agreement and memorization behavior
are measured empirically rather than assumed.

\subsection{Per-Token Mechanism}
\label{sec:per-token-mechanism}

At step $t$, the context is
\[
q_t=(\text{prompt}\,\|\,Y_1\cdots Y_{t-1}),
\]
so each released token becomes part of the next query.

\paragraph{Public support.}
The public model defines the admissible candidate set
\[
V_t=\mathrm{TopK}\!\left(p_{\mathrm{pub}}(\cdot\mid q_t),k\right),
\qquad k=200.
\]
The public model only defines this set; it does not supply a fallback vote. Each private
world chooses its own token within $V_t$. If a world's unrestricted top-1 token lies
outside $V_t$, that world instead votes for its highest-probability token inside
$V_t$. Since all subsequent voting and release occur inside $V_t$, no token outside
$V_t$ can be emitted. This is an \emph{off-support non-emission guarantee}, not a
general content-privacy guarantee.

\paragraph{Votes and active subspace.}
Under greedy decoding, world $i$ votes
\[
v_i=\arg\max_{v\in V_t}p_{S_i}(v\mid q_t),
\]
and the corresponding output is the one-hot vector $y_i=e_{v_i}$. The step-$t$ mechanism is therefore the vote map
$M_t : s \mapsto e_{v(s)}$, and $\Sigma$ is applied to it in the sense of Definition~3. Under coupled
Gumbel decoding, the vote is instead sampled as described in
Section~\ref{sec:gumbel}.

Let $A_t$ be the set of distinct votes and let $a_t=|A_t|$. Define the posterior vote
mass
\[
\pi_t(v)=\sum_{i:v_i=v}P_{t-1}(S_i).
\]
The posterior-weighted covariance of the one-hot vote is then
\[
C_t=\operatorname{diag}(\pi_t)-\pi_t\pi_t^\top.
\]
Coordinates outside $A_t$ have zero variance. Because one-hot vectors sum to one,
$C_t$ has a null direction, so the active calibration subspace has dimension at most
$a_t-1$. The small values of $a_t$ observed in practice are reported in the stability
results; they are empirical properties, not structural guarantees.

\paragraph{Calibration and release.}
Following the PAC noise-calibration result of \citet{zsd}, the noise covariance is calibrated from the current posterior, query mechanism, and
per-step budget,
\[
\Sigma_t=\Sigma(P_{t-1},M_t,b),
\]
using the active subspace of $C_t$. The curator then forms the internal vector
\[
R_t=e_{v_S}+Z_t,
\qquad
Z_t\sim\mathcal{N}(0,\Sigma_t),
\]
where $v_S$ is the vote of the realized world. Only the token
\[
Y_t=\arg\max_{v\in A_t}(R_t)_v
\]
is released. If all worlds are unanimous, then $C_t=0$, no calibration noise is
required, and the release is deterministic (Lemma~\ref{lem:singular}).

\paragraph{Posterior update and privacy-budget tracking.}
The curator updates its posterior with the internal vector $R_t$ under its exact Gaussian
likelihood, as in Algorithm~1 of \citet{zsd}:
\[
  P_t(s) \;\propto\; P_{t-1}(s)\,
  \exp\!\Big[-\tfrac{1}{2}(R_t - M_t(s))^{\top} \Sigma_t^{+} (R_t - M_t(s))\Big],
\]
followed by normalization, where $\Sigma_t^{+}$ replaces the inverse of \citet{zsd}
because our covariance is singular at every step (Lemma~\ref{lem:singular}); the
pseudo-determinant is the same for every $s$ and cancels. The updated posterior is used
for the next token's calibration. Each privately served token is charged the fixed budget
$b$. The curator continues while
\[
  \mathit{spent} + b \le B,
\]
and otherwise switches to a public fallback rule.

\begin{algorithm}[t]
\caption{PAC-private next-token release}
\label{alg:pac-token}
\begin{algorithmic}[1]
\Require Context $q_t$, posterior $P_{t-1}$, secret world $S$, running total $\mathrm{spent}$, budget parameters $b,B$,
public support size $k$, decoder $\in\{\text{greedy},
\text{coupled-Gumbel}(\tau)\}$
\If{$\mathrm{spent}+b>B$}
    \State \Return public fallback token
\EndIf
\State $V_t\gets\mathrm{TopK}(p_{\mathrm{pub}}(\cdot\mid q_t),k)$
\If{decoder is coupled Gumbel}
    \State Draw $u_v\overset{\mathrm{iid}}{\sim}\mathrm{Gumbel}(0,1)$ for all $v\in V_t$
    \State $v_i\gets\arg\max_{v\in V_t}
    [\log p_{S_i}(v\mid q_t)/\tau+u_v]$ for each world $i$
\Else
    \State $v_i\gets\arg\max_{v\in V_t}p_{S_i}(v\mid q_t)$ for each world $i$
\EndIf
\State $M_t : s \mapsto e_{v(s)}$ \Comment{one-hot vote map; deterministic given $U_t$}
\State $A_t\gets\mathrm{unique}(v_{1:m})$
\State $\pi_t(v)\gets\sum_{i:v_i=v}P_{t-1}(S_i)$ for $v\in A_t$
\State $C_t\gets\operatorname{diag}(\pi_t)-\pi_t\pi_t^\top$
\State $\Sigma_t \leftarrow \Sigma(P_{t-1}, M_t, b)$ on the active subspace of $C_t$ \Comment{conditional on $U_t$ under coupled decoding}
\State $R_t\gets e_{v_S}+Z_t$, where $Z_t\sim\mathcal{N}(0,\Sigma_t)$
\State $Y_t\gets\arg\max_{v\in A_t}(R_t)_v$
\State $P_t(s)\propto P_{t-1}(s)
\exp\!\left[
-\frac{1}{2}(R_t-M_t(s))^\top
\Sigma_t^{+}
(R_t-M_t(s))
\right]$, then normalize
\State $\mathrm{spent}\gets\mathrm{spent}+b$
\State \Return $Y_t$
\end{algorithmic}
\end{algorithm}
\subsection{The Token Interface}
The PAC accounting is applied to the curator's internal history $H_T$, whereas
the user observes only the released token sequence $Y_{1:T}$. The autoregressive query $q_{t+1}$ is itself a function of
the previously released tokens. Section~\ref{sec:theory} formalizes why this
token-adaptive interaction is still covered by posterior-aware composition. In particular,
\[
  I(S; Y_{1:T}) \;\le\; I(S; H_T) \;\le\; \sum_{t=1}^{T} b_t ,
\]
and for a constant per-token budget $b$,
\[
  I(S; Y_{1:T}) \;\le\; I(S; H_T) \;\le\; bT .
\]
The right-hand inequality is the composition theorem of \citet{zsd}, which bounds
$I(S; H_T)$ directly; the left is data processing, since every released token is a
deterministic function of the internal history (Lemma~\ref{lem:token}). Thus the token
stream can leak no more than the internal process. The empirical analysis later estimates
$I(S; H_T)$ from posterior entropy; it does not directly estimate the gap between the
internal history and the token channel.

\subsection{Coupled Gumbel Decoding}
\label{sec:gumbel}

Greedy voting is the direct analogue of hard-label output stabilization, but long-form
generation benefits from sampled decoding. Sampling each world independently would add
decoder randomness to the vote disagreement even when the underlying world
distributions are similar. We therefore use shared-randomness coupling.

After $q_t$ is fixed, the curator draws one set of shared Gumbel variables
\[
U_t=(u_v)_{v\in V_t},
\qquad
u_v\overset{\mathrm{iid}}{\sim}\mathrm{Gumbel}(0,1),
\]
and every world votes
\[
v_i=
\arg\max_{v\in V_t}
\left[
\frac{\log p_{S_i}(v\mid q_t)}{\tau}+u_v
\right],
\qquad \tau=1.
\]
Marginally, each world obtains a sample from its own temperature-$\tau$,
support-restricted conditional distribution, while the shared randomness preserves
coupling across similar worlds \citep{maddison2014}. Conditional on the fresh public coins, the vote map is
deterministic; because the coins are independent of $S$ and drawn only after the query
is fixed, they do not add privacy leakage. The formal accounting statement is given in
Section~\ref{sec:theory}; the generation-quality effects are evaluated separately in
the results.

Inference determinism for all non-randomized parts of the mechanism, together with
numerical and reproducibility settings, is documented in
Section~\ref{sec:setup}.

%% file: sections/04_theory.tex
\section{Privacy Analysis}
\label{sec:theory}

The mechanism of Section~\ref{sec:mechanism} instantiates posterior-aware composition
\citep{zsd} for autoregressive generation. Four features of this instantiation require additional justification in the autoregressive setting: the vote covariance is singular at every step, the
release is a token rather than the vector the accounting is defined on, the query
sequence contains the mechanism's own outputs, and coupled decoding makes the per-world
vote random. This section states the end-to-end guarantee and the results it rests on,
and closes with the identity that turns the leakage bound into a measurable quantity.
Proofs are sketched here; full arguments are in Appendix~\ref{app:proofs}.

\subsection{Setting and Main Guarantee}
\label{sec:main-guarantee}

Fix the universe $U$, the $m$-world construction and prior $P_S$ of
Section~\ref{sec:threat-model}, a per-token budget $b$, and a horizon $T$. Let
$Y_1,\dots,Y_T$ be the tokens released by Algorithm~\ref{alg:pac-token} under any
adversarial querying strategy in which the query at step $t+1$ may depend on the
released prefix $Y_{\le t}$, on public randomness, and on the adversary's own
randomness, but has no side channel to $S$.

\paragraph{Step ordering and notation.}
Each step proceeds as
\[
  q_t \ \text{fixed} \;\longrightarrow\; U_t \ \text{realized} \;\longrightarrow\;
  M_t^{U_t} \ \text{fixed} \;\longrightarrow\; C_t,\ \Sigma_t
  \;\longrightarrow\; R_t \;\longrightarrow\; Y_t .
\]
The query $q_t$ is determined by the history; the public coins $U_t$ are then realized;
and only then is the step's mechanism fixed. We write
$M_t^{U_t} : s \mapsto e_{v(s;\,q_t, U_t)}$ for the induced one-hot vote map, which is
deterministic in $s$ once $(q_t, U_t)$ are. Its covariance under the current belief
$P_{t-1}$ is
\[
  C_t \;=\; \operatorname{Cov}_{s \sim P_{t-1}}\!\big(M_t^{U_t}(s)\big)
  \;=\; \operatorname{diag}(\pi_t) - \pi_t\pi_t^\top ,
\]
and $\Sigma_t$ is the Gaussian noise covariance calibrated from $C_t$ to the budget $b$,
supported on the active subspace of $C_t$ (Lemma~\ref{lem:singular}). Under greedy
decoding $U_t$ is null and $M_t^{U_t} = M_t$; every statement below covers that case as
a degenerate instance.

\paragraph{Interaction history.}
Because the coins enter the mechanism, they enter the history:
\[
  H_t \;=\; \big\{ (M_i^{U_i},\, U_i,\, R_i) \big\}_{i=1}^{t}, \qquad H_0 = \emptyset .
\]
This is the object the belief $P_t$ conditions on, the object the composition bound
applies to, and the object whose entropy is measured in Section~\ref{sec:mi-identity}.
The voted set $A_t$ is $H_t$-measurable (Lemma~\ref{lem:support}), and each released
token $Y_t = \arg\max_{v \in A_t}(R_t)_v$ is a deterministic function of $H_t$.

\paragraph{Assumptions on the coins.}
The analysis uses exactly three properties of $U_t$:
\begin{enumerate}
  \item[(C1)] $I(S; U_t \mid H_{t-1}) = 0$;
  \item[(C2)] $U_t$ is determined, and known to the curator, before $\Sigma_t$ is
  computed;
  \item[(C3)] $U_t$ is a component of $H_t$.
\end{enumerate}
Drawing $U_t$ afresh at each step, as Algorithm~\ref{alg:pac-token} does, is a
sufficient way to obtain (C1)--(C3); it is not necessary, and in particular we make no
soundness claim against coins derived deterministically from the context
(Remark~\ref{rem:coin-order}).

\begin{theorem}[End-to-end guarantee]
\label{thm:main}
Under the conditions above, Algorithm~\ref{alg:pac-token} satisfies the following guarantee for either decoder, provided that the public coins satisfy (C1)–(C3)
\[
  I(S; Y_{1:T}) \;\le\; I(S; H_T) \;\le\; bT .
\]
Consequently, for any attack criterion whose informed-adversary prior success rate is
$\bar\delta_0$, the posterior success rate is bounded by the PAC privacy conversion at
total budget $B_T = bT$. In particular membership inference against any record has prior
$\bar\delta_0 = 1/2$, and identification of the realized world has prior
$\bar\delta_0 = 1/m$.
\end{theorem}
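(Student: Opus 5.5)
The plan is to split the chain into its two inequalities and prove them separately, reducing the right-hand one to Theorem~\ref{thm:composition} after checking that the autoregressive, coin-augmented interaction fits its hypotheses.

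\textbf{Left inequality (data processing).} Each released token satisfies $Y_t=\arg\max_{v\in A_t}(R_t)_v$, and $A_t$ is the image of the vote map $M_t^{U_t}$, which is recorded in $H_t$. So $Y_{1:T}=g(H_T)$ for a deterministic $g$. Ties in the argmax occur with probability zero whenever noise is present; under unanimity $A_t$ is a singleton and there is nothing to break. The data-processing inequality for the Markov chain $S\to H_T\to Y_{1:T}$ then gives $I(S;Y_{1:T})\le I(S;H_T)$.

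\textbf{Right inequality.} Expand $I(S;H_T)$ by the chain rule over steps, and split each increment as
\[
I(S;M_t^{U_t},U_t,R_t\mid H_{t-1}) = I(S;M_t^{U_t},U_t\mid H_{t-1}) + I(S;R_t\mid H_{t-1},M_t^{U_t},U_t).
\]
\begin{itemize}
\item The first term vanishes. The query $q_t$ is a function of $Y_{<t}$, which is a function of $H_{t-1}$, together with adversary randomness that is independent of $S$ given $H_{t-1}$. By (C1), $U_t$ carries no information about $S$ given $H_{t-1}$. Since $M_t^{U_t}$ is a deterministic function of $(q_t,U_t)$ and the publicly known world models, this yields the no-side-channel condition \eqref{eq:no-side-channel} with the augmented mechanism $(M_t^{U_t},U_t)$.
\item The second term is at most $b$. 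Lemma~\ref{lem:tracked-posterior} identifies $P_{t-1}$ as $P_{S\mid H_{t-1}}$, and (C1) lets it also serve as the conditional law of $S$ given $(H_{t-1},U_t)$. By (C2), $\Sigma_t=\Sigma(P_{t-1},M_t^{U_t},b)$ is computed for a mechanism that is deterministic in $s$ under exactly this conditional law. Definition~\ref{def:sigma} then bounds the term by $b$ directly.
\end{itemize}
Summing over $t$ gives $bT$. This is exactly the proof of Theorem~\ref{thm:composition}, with $U_t$ folded into the step-$t$ query; (C3) is what makes $U_t$ part of the conditioning history. The alternative route is to invoke the theorem verbatim, treating the pair $(M_t^{U_t},U_t)$ as the adversarially selected mechanism.

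\textbf{Main obstacle: singular covariance.} The hard step is justifying the use of $\Sigma$ and the posterior update when $C_t$ is singular. Theorem~\ref{thm:noise} and the update \eqref{eq:posterior-update} are stated with $\Sigma_t^{-1}$. I would proceed in three steps.
\begin{itemize}
\item Decompose $\mathbb{R}^{|V_t|}$ into the active subspace (the range of $C_t$) and its orthogonal complement. On the complement, $M_t^{U_t}(s)$ is constant in $s$ for $s$ in the support of $P_{t-1}$: its projection has zero variance, because one-hot votes over $A_t$ differ only within the span of $\{e_v-e_{v'}\}$. So that component of $R_t$ is a known constant plus zero noise and is independent of $S$.
\item Apply Theorem~\ref{thm:noise} to the projected mechanism in active coordinates, where the calibrated covariance is nondegenerate. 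Its inverse on that subspace is the pseudoinverse $\Sigma_t^+$. The pseudo-determinant does not depend on $s$, so the Bayes update with $\Sigma_t^+$ is the exact posterior and Lemma~\ref{lem:tracked-posterior} still holds.
\item In the unanimous case, $C_t=0$, $R_t$ is deterministic given $H_{t-1}$ and $U_t$, and the step contributes $0\le b$.
\end{itemize}

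\textbf{Consequence.} Once $I(S;Y_{1:T})\le bT$ is established, the attack-success statement follows from Theorem~\ref{thm:mi-to-attack} with $B_T=bT$, applied to the adversary observing $Y_{1:T}$, or even $H_T$. The priors are $1/2$ for record membership, by the balanced $m/2$ assignment, and $1/m$ for world identification, by Proposition~\ref{prop:prior-floor}.
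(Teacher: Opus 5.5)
Your proposal is correct and follows the paper's proof essentially step for step. It uses the same chain-rule split of $I(S;H_T)$ into a vanishing $(M_t^{U_t},U_t)$ term, obtained from (C1) and autoregressive admissibility, and a release term bounded by $b$ through conditional determinism under (C2) plus active-subspace calibration with the pseudoinverse update (the content of Proposition~\ref{prop:coupled} and Lemma~\ref{lem:singular}); it then applies data processing through $H_T$ for the left inequality, as in Lemma~\ref{lem:token}, and the PAC conversion at priors $1/2$ and $1/m$, and your explicit remark that (C1) is what lets $P_{t-1}$ serve as the law of $S$ after conditioning on the coins makes visible a step the paper leaves implicit.
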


\begin{proof}[Proof sketch]
The chain rule decomposes $I(S; H_T)$ into per-step terms
$I(S; M_t^{U_t}, U_t \mid H_{t-1}) + I(S; R_t \mid H_{t-1}, M_t^{U_t}, U_t)$. The first
vanishes by (C1) together with Proposition~\ref{prop:autoregressive}; the second is at
most $b$ by Proposition~\ref{prop:coupled}(i), whose conditional-determinism hypothesis
holds by (C2) and whose calibration is valid despite the singular covariance by
Lemma~\ref{lem:singular}. Summing gives $I(S; H_T) \le bT$, and
Lemma~\ref{lem:token} supplies the first inequality. The attack bounds follow from the
PAC privacy conversion, whose only inputs are the prior and the total budget. Full
derivation in Appendix~\ref{app:thm-main}.
\end{proof}

The guarantee has the same form as in the classification setting: it depends only on
$(b, T, \bar\delta_0)$. What is new is that it holds for a token stream in which the
mechanism's outputs become its subsequent inputs, and that $T$ counts released tokens
rather than user-level queries.

\subsection{Token-Only Release}
\label{sec:token-release-theory}

The accounting is defined on the internal history $H_T$, which is never released.

\begin{lemma}[Token-only release]
\label{lem:token}
Suppose the curator runs Algorithm~\ref{alg:pac-token} with per-step budgets $b$, and
the adversary chooses each query as a function of the token history $(Y_{<t}, M_{<t})$
and its own randomness. Then
(i) the token-adaptive adversary is a special case of the $H$-adaptive adversary, since
$\sigma(Y_{<t}, M_{<t}) \subseteq \sigma(H_{t-1})$; and
(ii) $S \to H_T \to Y_{1:T}$ is a Markov chain, so
\[
  I(S; Y_{1:T}) \;\le\; I(S; H_T).
\]
\end{lemma}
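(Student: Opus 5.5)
Both parts come down to showing that each released token $Y_t$ is a deterministic function of the step-$t$ entry of the internal history. Part (i) then follows from a measurability inclusion, and part (ii) from the data-processing inequality.

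\textbf{Step 1: $Y_t$ and $M_t$ are $H_t$-measurable.} By definition $H_t=\{(M_i^{U_i},U_i,R_i)\}_{i\le t}$, so $M_t^{U_t}$ and $R_t$ are components of $H_t$. The mechanism $M_t^{U_t}:s\mapsto e_{v(s;q_t,U_t)}$ is a known function on the finite secret space, so the vote set
\[
A_t=\{v: (M_t^{U_t}(s))_v=1 \text{ for some } s\in\mathcal{S}\}
\]
is a function of $M_t^{U_t}$ alone (this is the content of Lemma~\ref{lem:support}). Hence $Y_t=\arg\max_{v\in A_t}(R_t)_v$ is a deterministic function of $(M_t^{U_t},R_t)$. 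I fix a deterministic tie-breaking rule, say the smallest index; ties occur with probability zero on the non-unanimous coordinates, and under unanimity $A_t$ is a singleton, so no tie arises there. One degenerate case needs care: when $\Sigma_t$ vanishes on the active subspace (Lemma~\ref{lem:singular}), $R_t=e_{v_S}$ exactly, and the argmax over $A_t$ is still a function of $(M_t^{U_t},R_t)$.

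\textbf{Step 2: part (i).} Step 1 gives $\sigma(Y_i,M_i)\subseteq\sigma(H_i)\subseteq\sigma(H_{t-1})$ for every $i<t$, because histories are nested. Therefore $\sigma(Y_{<t},M_{<t})\subseteq\sigma(H_{t-1})$. Any query rule that is measurable in $(Y_{<t},M_{<t})$ and independent adversarial randomness is then an $H$-adaptive rule. This also transfers the no-side-channel condition: if $M_t$ is a function of $(Y_{<t},M_{<t},\xi)$ with $\xi\perp (S,H_{t-1})$, then $M_t\perp S\mid H_{t-1}$. Part (i) therefore places the token-adaptive adversary inside the hypothesis of Theorem~\ref{thm:composition}.

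\textbf{Step 3: part (ii).} There is a measurable map $g$ with $Y_{1:T}=g(H_T)$, obtained by applying Step 1 at each $t\le T$. Then $S\to H_T\to Y_{1:T}$ is a Markov chain, since $Y_{1:T}$ is conditionally degenerate given $H_T$, and the data-processing inequality gives $I(S;Y_{1:T})\le I(S;H_T)$.

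\textbf{Main obstacle.} The delicate point is Step 1, specifically that $A_t$ is recoverable from the history rather than requiring knowledge of $S$ or of quantities outside $H_t$. Under coupled decoding this is exactly why $U_t$ must be recorded in $H_t$ (condition (C3)), so that $M_t^{U_t}$, and hence $A_t$, are history-measurable. If $A_t$ were instead defined by the realized world or computed from unrecorded coins, the map $g$ could depend on $S$ and the Markov chain would fail. I would therefore state Lemma~\ref{lem:support} explicitly and check it carefully before invoking data processing.
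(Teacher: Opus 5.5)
Your proposal is correct and follows essentially the same route as the paper: show that each $Y_t$ is $H_t$-measurable because $A_t$ is, deduce the $\sigma$-algebra inclusion for (i), and apply data processing to the deterministic map $Y_{1:T}=g(H_T)$ for (ii). The only difference is cosmetic. You read $A_t$ off the recorded vote map $M_t^{U_t}$ as the set of coordinates hit by its image, whereas the paper's Lemma~\ref{lem:support} computes $A_t$ from $(q_t,U_t)$ and the public adapters using (C3); either suffices.
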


\begin{proof}[Proof sketch]
Each $Y_t$ is $H_t$-measurable, since $A_t$ is (Lemma~\ref{lem:support}) and
$Y_t = \arg\max_{v\in A_t}(R_t)_v$. Hence the token history generates a
sub-$\sigma$-algebra of $\sigma(H_{t-1})$, giving (i), and $Y_{1:T}$ is a deterministic
function of $H_T$, giving the Markov chain and (ii) by data processing.
\end{proof}

\begin{remark}[Why the bound runs through $H_T$ rather than $R_{1:T}$]
\label{rem:not-R-alone}
Under coupled decoding it is \emph{not} valid to write
$I(S; Y_{1:T}) \le I(S; R_{1:T})$: the voted set $A_t$ is a function of $U_t$, not of
$R_t$, so $S \to R_{1:T} \to Y_{1:T}$ is not a Markov chain, and conditioning on the
coins repairs the deterministic map but not the inequality, since a conditional mutual
information need not be bounded by its unconditional counterpart. The correct object is
the joint history; see Appendix~\ref{app:not-R} for the details. Under greedy decoding
the distinction collapses, but we state everything through $H_T$ so that one argument
covers both decoders.
\end{remark}

\begin{remark}[What is and is not claimed]
\label{rem:conservatism}
The budget is charged on $H_T$ while only $Y_{1:T}$ leaves the curator, so the
accounting is conservative by whatever the $\arg\max$ discards. We do not quantify that
gap; doing so would require tracking the adversary's token-level posterior with
certified likelihoods, a different mechanism with a different calibration. The empirical
analysis of Section~\ref{sec:privacy-results} estimates $I(S; H_T)$, which by
Lemma~\ref{lem:token} upper-bounds the leakage reaching the user. The calibration
matrices $\Sigma_t$ are themselves never published, which is immaterial for the same
reason.
\end{remark}

\subsection{Singular Covariance Calibration}
\label{sec:singular-covariance}

The PAC noise determination result calibrates from an output covariance whose spectrum
may be full. Ours never is: $C_t$ has rank at most $a_t - 1$, and $C_t = 0$ under
unanimity. Calibration must therefore be defined on the active subspace, and the belief
update must use a pseudoinverse. This low-rank structure is a key advantage of our mechanism. Since
$a_t \leq \min(m,k)$, $\operatorname{rank}(C_t) \leq \min(m-1,k-1)$,
which is at most $127$ in our $m=128$, $k=200$ setting. PAC calibration
adds noise only in this active disagreement subspace, unlike PMixED,
which does not exploit this covariance structure.

\begin{lemma}[Active-subspace calibration]
\label{lem:singular}
Let $M_t^{U_t}$ take values in $\{e_v : v \in A_t\}$ with covariance $C_t$ under
$P_{t-1}$, and let $\mathcal{A} = \operatorname{range}(C_t)$. Let $\Sigma_t$ be the PAC
calibration computed on $\mathcal{A}$ and zero on $\mathcal{A}^\perp$. Then
(i) $I_{S \sim P_{t-1}}(S;\, M_t^{U_t}(S) + Z_t) \le b$ for
$Z_t \sim \mathcal{N}(0,\Sigma_t)$;
(ii) the update $P_t(s) \propto P_{t-1}(s)\exp[-\tfrac12 (R_t - y_s)^\top
\Sigma_t^{+}(R_t - y_s)]$ is the exact Bayes update given $R_t$; and
(iii) if $a_t = 1$ then $\Sigma_t = 0$ and the step leaks nothing.
\end{lemma}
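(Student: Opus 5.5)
The approach is to reduce everything to the full-rank case by restricting to the affine hull of the vote vectors, and then to invoke Theorem~\ref{thm:noise} and Bayes' rule there. The first step is to describe the geometry. Write $\bar y=\sum_{v\in A_t}\pi_t(v)e_v$ for the posterior mean vote. Since every world with positive posterior mass votes for some $v\in A_t$, each realized output satisfies $M_t^{U_t}(s)-\bar y\in\mathcal{A}=\operatorname{range}(C_t)$. This holds because the centered outputs span exactly the range of their covariance. Worlds with $P_{t-1}(s)=0$ never affect the mutual information and are not revived by the update, so they can be ignored. Let $Q$ be an orthonormal basis of $\mathcal{A}$, of dimension $r\le a_t-1$, and set $\tilde M(s)=Q^\top(M_t^{U_t}(s)-\bar y)\in\mathbb{R}^r$. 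The covariance of $\tilde M$ is $Q^\top C_tQ$, which is positive definite.

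For (i), apply Theorem~\ref{thm:noise} to the deterministic map $\tilde M$ under $P_{t-1}$. Determinism holds conditionally on $U_t$, by (C2). This gives $\tilde\Sigma$ with $I(S;\tilde M(S)+\tilde Z)\le b$, and we set $\Sigma_t=Q\tilde\Sigma Q^\top$. Now $Z_t=Q\tilde Z$ lies in $\mathcal{A}$ almost surely. Hence $R_t=\bar y+Q(\tilde M(S)+\tilde Z)$ is an injective affine image of $\tilde M(S)+\tilde Z$, and the component of $R_t$ in $\mathcal{A}^\perp$ is the constant $P_{\mathcal{A}^\perp}\bar y$. Mutual information is invariant under injective measurable maps, so $I(S;R_t)=I(S;\tilde M(S)+\tilde Z)\le b$.

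For (ii), the law of $R_t$ given $S=s$ is a nondegenerate Gaussian on the affine subspace $\bar y+\mathcal{A}$, with mean $y_s$ and covariance $\Sigma_t$. Its density with respect to Lebesgue measure on that subspace is proportional to $\exp[-\tfrac12 (R_t-y_s)^\top\Sigma_t^+(R_t-y_s)]$. The normalizing pseudo-determinant does not depend on $s$. All worlds share this support subspace, so Bayes' rule with a common dominating measure yields exactly the stated update, and the constant cancels. For $R_t$ off the subspace, which is a null event, the update is immaterial. Combining this with Lemma~\ref{lem:tracked-posterior} keeps $P_t=P_{S\mid H_t}$.

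For (iii), take $a_t=1$. Then $\pi_t$ is a point mass, $C_t=0$, $\mathcal{A}=\{0\}$ and $\Sigma_t=0$. In that case $R_t=e_{v}$ is constant over the support of $P_{t-1}$, so $I_{S\sim P_{t-1}}(S;R_t)=0$. The update has a zero exponent and leaves $P_t=P_{t-1}$.

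I expect the main obstacle to be the rigor of (ii), namely choosing the dominating measure correctly. Two points need care. One must argue that all conditional laws share the same affine support. The other is that the pseudoinverse quadratic form coincides with the subspace density exponent; this follows from $R_t-y_s\in\mathcal{A}$ and $\Sigma_t^+=Q\tilde\Sigma^{-1}Q^\top$.
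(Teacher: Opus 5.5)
Your proof is correct and follows essentially the paper's argument: show that the centered votes span $\operatorname{range}(C_t)$, so the $\mathcal{A}^\perp$-component of the output is constant in $S$; apply the noise-determination theorem inside $\mathcal{A}$; and cancel the $s$-independent pseudo-determinant in the Bayes update. Your explicit coordinatization $\tilde M(s) = Q^\top(M_t^{U_t}(s)-\bar y)$, with $\Sigma_t^{+}=Q\tilde\Sigma^{-1}Q^\top$, and your identification of the common support as the affine subspace $\bar y+\mathcal{A}$ (the paper phrases it through the projection onto $\mathcal{A}$) make the same steps slightly more precise.
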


\begin{proof}[Proof sketch]
All outputs $e_v$ have coordinate sum one, so their differences lie in
$\mathbf{1}^\perp$, while $\ker C_t \supseteq \operatorname{span}(\mathbf{1})$. The
projection of $M_t^{U_t}(S)$ onto $\mathcal{A}^\perp$ is therefore constant in $S$:
releasing it exactly contributes no mutual information, and (i) reduces to the noise
determination result within $\mathcal{A}$. For (ii), the pseudo-determinant factor is
identical for every $s$ and cancels in the normalization. (iii) is immediate from
$C_t = 0$. Appendix~\ref{app:singular}.
\end{proof}

\begin{remark}[Numerical floor]
\label{rem:floor}
Implementations must floor small eigenvalues upward and never truncate them to zero.
Truncating a small but nonzero eigenvalue would place zero noise along a direction in
which the worlds genuinely differ, violating (i); flooring upward only adds noise beyond
what the budget requires, and is therefore conservative. This is a soundness condition
on the implementation, not a numerical convenience.
\end{remark}

\begin{lemma}[Measurable restriction]
\label{lem:support}
$V_t$ is a deterministic function of $q_t$ and the public model, and $A_t$ is a
deterministic function of $(q_t, U_t)$ and the $m$ trained adapters. Both are therefore
$H_t$-measurable and independent of $S$ given that information. Releasing
$\arg\max_{v \in A_t}(R_t)_v$ rather than $\arg\max_{v \in V_t}(R_t)_v$ discloses
nothing beyond $H_t$.
\end{lemma}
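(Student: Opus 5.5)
The claim to prove is Lemma~\ref{lem:support}: both restriction sets are functions of public, secret-independent inputs that the history already records, so restricting the $\arg\max$ to $A_t$ adds nothing. I would prove the three assertions in order.

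\emph{Determinism.} For $V_t$, it is enough to unfold the definition. $V_t=\mathrm{TopK}(p_{\mathrm{pub}}(\cdot\mid q_t),k)$ uses only the frozen public model, the context $q_t$ and the constant $k$. With a fixed deterministic tie-breaking rule, as documented in Section~\ref{sec:setup}, $V_t$ is a deterministic function of $q_t$.

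For $A_t$, recall that world $i$'s vote is
\[
v_i=\arg\max_{v\in V_t}\bigl[\log p_{S_i}(v\mid q_t)/\tau+u_v\bigr],
\]
or the greedy analogue with $U_t$ null. This is a function of $q_t$, of $U_t$ and of the $i$-th adapter. The adapters are fixed objects trained offline, known to the curator and, under the threat model of Section~\ref{sec:threat-model}, to the adversary. Hence $A_t=\mathrm{unique}(v_{1:m})$ is a deterministic function of $(q_t,U_t)$ and the $m$ adapters. The key point is that it ranges over \emph{all} worlds, not just the realized one, so $S$ never enters.

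\emph{Measurability.} I would argue that $(q_t,U_t)$ is recoverable from $H_t$, and I would handle the two components separately.

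For $U_t$, this is (C3).

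For $q_t$, I would use the fact that the vote map $M_t^{U_t}$ is a component of $H_t$. The map $s\mapsto e_{v(s;q_t,U_t)}$ already encodes every world's vote, so $A_t$ is simply the set of indices $v$ with $M_t^{U_t}(s)=e_v$ for some $s$. That makes $A_t$ $\sigma(H_t)$-measurable directly, without needing to recover $q_t$ at all.

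As a backup route, $q_t$ is determined by the prompt, which is part of the query, together with $Y_{<t}$. Each $Y_i$ is a function of $(A_i,R_i)$ with $i<t$, so an induction on $t$ gives $q_t\in\sigma(H_{t-1})$. The base case is $q_1=$ prompt, and the step uses $Y_{t-1}=\arg\max_{A_{t-1}}R_{t-1}$ with $A_{t-1}$ measurable by the inductive hypothesis. This is the step I expect to need the most care: the induction must not become circular with Lemma~\ref{lem:token}, and I avoid that by using the $M_t^{U_t}$ route as the primary argument.

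\emph{Conditional independence and no extra disclosure.} Once $A_t$ is an $H_t$-measurable function $g(H_t)$, it follows that $I(S;A_t\mid H_t)=0$. This gives ``independent of $S$ given that information.''

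Finally, $\arg\max_{v\in A_t}(R_t)_v=\phi(R_t,A_t)$ is a deterministic function of $H_t$. Any pair $(H_t,Y_t)$ therefore generates the same $\sigma$-algebra as $H_t$, so $I(S;H_t,Y_t)=I(S;H_t)$.

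I would close with a remark: the comparison with $\arg\max_{v\in V_t}$ is moot for the accounting. Both are $H_t$-measurable, so neither choice changes the bound of Theorem~\ref{thm:composition}.
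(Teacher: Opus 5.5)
Your proposal is correct. The determinism step is the paper's own argument: $A_t$ collects the votes of all $m$ worlds, so it is fixed by $(q_t,U_t)$ and the adapters, and $S$ never enters.

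The measurability step is where you genuinely differ. The paper recomputes $A_t$ from public ingredients. The informed adversary knows every adapter, sees the coins by (C3), and the context is public, so it ``can compute $A_t$ itself.'' Your primary route instead reads $A_t$ off the history as the image of the recorded vote map $M_t^{U_t}$. That needs neither (C3) nor recovery of $q_t$, and it involves no induction. So $Y_t\in\sigma(H_t)$ follows without the step-by-step interleaving with Lemma~\ref{lem:token} and Proposition~\ref{prop:autoregressive} that the paper's appeal to a ``public context'' implicitly uses.

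The paper's route, in turn, does not depend on the convention that $H_t$ stores the full $m$-entry vote table. It would go through even if only $(q_t,U_t,R_t)$ were logged. It also states outright that the restriction is recomputable from public information rather than merely recorded. Your route moves that substantive point (that logging the whole map is harmless) onto the determinism observation and Proposition~\ref{prop:coupled}(ii). That is fine, since you do state the observation.

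One small repair is needed. The map argument does not reach $V_t$, so for $V_t$ you rely on the induction. There the claim $q_t\in\sigma(H_{t-1})$ fails at $t=1$, because $H_0=\emptyset$ while the prompt is the adversary's choice. It fails likewise at every step that issues a fresh prompt, as in the session protocol. What the lemma actually needs is $q_t\in\sigma(H_t)$. That holds under the convention that the submitted query is part of the recorded history, which the paper uses when it writes $q_t\in H_t$.
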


\begin{proof}
$V_t$ is computed from the public model on a public context. $A_t$ is the set of votes
of all $m$ worlds, not of the realized one, so it is determined by $(q_t, U_t)$ and the
adapters; the informed adversary knows every adapter and, by (C3), observes every coin,
so it can compute $A_t$ itself. Conditioning on a quantity the adversary can already
compute adds no information.
\end{proof}

The scope of this lemma should be read narrowly. It says the restriction to the public
support opens no inference channel; it does not say that membership in the support is
private. Off-support tokens are structurally non-emittable, a guarantee of a different
kind that is not part of the mutual-information accounting
(Section~\ref{sec:canary-results}).

\subsection{Sequential Composition}
\label{sec:composition}

Posterior-aware composition assumes $M_t \perp S \mid H_{t-1}$. In our setting the query
is not merely adaptive but self-generated: $q_{t+1}$ contains the mechanism's own
outputs.

\begin{proposition}[Autoregressive queries are admissible]
\label{prop:autoregressive}
Let $q_{t+1} = g(q_t, Y_t)$ for the deterministic append map $g$, with the prompt chosen
by the adversary from $(Y_{<t}, M_{<t})$ and its own randomness $\omega \perp S$. Then
$M_{t+1} \perp S \mid H_t$, so per-step budgets compose linearly and
$I(S; H_T) \le \sum_{t=1}^{T} b_t$.
\end{proposition}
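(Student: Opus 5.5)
The plan is to show that $M_{t+1}$, meaning the step-$(t+1)$ vote map before the coins $U_{t+1}$ are drawn, is a deterministic function of $H_t$ together with randomness independent of $S$ given $H_t$. The no-side-channel condition \eqref{eq:no-side-channel} then holds, and Theorem~\ref{thm:composition} applies directly.

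First I reduce the query to history-measurable quantities. By induction on $t$, the context $q_{t+1}$ is a deterministic function of the prompt and $Y_{1:t}$, via repeated application of the append map $g$. By Lemma~\ref{lem:token}(i), each $Y_i$ is $H_i$-measurable: Lemma~\ref{lem:support} gives that $A_i$ is computable from $(q_i,U_i)$ and the public adapters, and $Y_i=\arg\max_{v\in A_i}(R_i)_v$. The prompt is a function of $(Y_{<t},M_{<t})$ and $\omega$, which is likewise $\sigma(H_{t})\vee\sigma(\omega)$-measurable. Hence $q_{t+1}=\phi(H_t,\omega)$ for a fixed measurable $\phi$.

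Second, $M_{t+1}$ is the map $s\mapsto e_{v(s;q_{t+1},\cdot)}$, built from $q_{t+1}$, the public model (through $V_{t+1}$) and the $m$ adapters. None of these depends on the realized index $S$: the adapters for all worlds are fixed and known, and only which of them is real is secret. So $M_{t+1}=\psi(H_t,\omega)$. It then suffices to show $\omega\perp S\mid H_t$. I take $\omega$ to be drawn before the interaction with $\omega\perp S$. I then show $(\omega,S)$ factorizes along the history by induction. $H_t$ is generated from $(S,\omega)$ and the mechanism's internal noise $Z_{1:t}$ and coins $U_{1:t}$. The noise and coins are fresh, independent of $(S,\omega)$ given the past, and the conditional law of each new entry $(M_i^{U_i},U_i,R_i)$ given $(H_{i-1},S,\omega)$ depends on $\omega$ only through $H_{i-1}$-measurable quantities and on $S$ only through $M_i^{U_i}(S)$. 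The joint density therefore factorizes as $p(s)p(\omega)\prod_i f_i(h_i\mid h_{i-1},s)\,g_i(h_i\mid h_{i-1},\omega)$, where each $g_i$ is the deterministic query-selection factor. This factorization yields $S\perp\omega\mid H_t$, and therefore $M_{t+1}\perp S\mid H_t$.

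Finally, with (C1) covering the coins at step $t+1$, Theorem~\ref{thm:composition} (extended to histories containing $U_t$ as in Theorem~\ref{thm:main}) gives $I(S;H_T)\le\sum_t b_t$. I expect the main obstacle to be the conditional independence $\omega\perp S\mid H_t$. Independence $\omega\perp S$ holds marginally, but conditioning on a collider could in principle couple them. The factorization argument, showing that $\omega$ influences $H$ only through query choices made from already-conditioned history, is the step that needs care. Apart from that, the self-generated nature of the queries adds nothing, since $g$ is deterministic and reads only released data.
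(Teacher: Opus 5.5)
Your proposal is correct and follows the paper's route: show that $q_{t+1}$, and hence $M_{t+1}$, is determined by $H_t$ together with the adversary's randomness, conclude the no-side-channel condition, and invoke the composition theorem. The only difference is that you explicitly prove $\omega\perp S\mid H_t$ by factorizing the joint density into an $(h_t,\omega)$ factor and an $(h_t,s)$ factor, whereas the paper's one-line proof treats $q_t$ as recorded in $H_t$ and passes directly from the marginal independence $\omega\perp S$ to the conclusion, so your version fills in that collider step rather than changing the argument.
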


\begin{proof}
$Y_t$ is a deterministic function of $H_t$, and $q_t \in H_t$ by induction, so $q_{t+1}$
is $H_t$-measurable. With $\omega \perp S$, the query $M_{t+1}$ depends on $S$ only
through $H_t$, which is the required condition.
\end{proof}

\begin{remark}[Feedback does not amplify]
Feeding the mechanism's output back into its input creates no unaccounted loop: the
released token was charged at the step that produced it, and conditioning the next query
on it is exactly the adaptivity the composition theorem tolerates. The consequence is
not a weaker bound but a faster clock, since $T$ counts tokens. Likewise the stop rule
$\mathit{spent} + b \le B$ is $H_{t-1}$-measurable, so halting discloses nothing beyond
the public budget; realized generation length may differ from the nominal length and is
reported as such.
\end{remark}

\subsection{Coupled Decoding}
\label{sec:coupled-decoding}

The noise determination result privatizes a \emph{deterministic} map $s \mapsto M_t(s)$.
\citet{zsd} meet this precondition by derandomizing training and inference outright.
Coupled Gumbel decoding reintroduces randomness into the vote, so the precondition must
be re-established rather than inherited.

\begin{proposition}[Coupled decoding preserves the accounting]
\label{prop:coupled}
Assume (C1)--(C3), and let the votes be
$v_i = \arg\max_{v \in V_t}[\log p_{S_i}(v \mid q_t)/\tau + u_v]$. Then
(i) $I(S; R_t \mid H_{t-1}, M_t^{U_t}, U_t) \le b$;
(ii) $I(S; M_t^{U_t}, U_t \mid H_{t-1}) = 0$; and
(iii) the belief update conditioned on $(H_{t-1}, M_t^{U_t}, U_t)$ is the exact posterior
$P_{S \mid H_t}$, so posterior-aware composition applies unchanged.
\end{proposition}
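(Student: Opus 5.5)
The plan is to prove the three parts in the order (ii), (i), (iii), because (ii) establishes that conditioning on the coins leaves the secret's distribution unchanged, and both other parts use that fact.

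\emph{Part (ii).} First I show that $M_t^{U_t}$ is a deterministic function of $(q_t, U_t)$ and the $m$ public adapters. This is the same observation as in Lemma~\ref{lem:support}: the map $s \mapsto e_{v(s;q_t,U_t)}$ is built from all worlds' votes, not the realized one. The chain rule then gives
\[
I(S; M_t^{U_t}, U_t \mid H_{t-1}) = I(S; U_t \mid H_{t-1}) + I(S; M_t^{U_t} \mid H_{t-1}, U_t).
\]
The first term is zero by (C1). For the second term, $q_t$ is $H_{t-1}$-measurable by Proposition~\ref{prop:autoregressive} (for $t=1$ the prompt depends on $S$ only through $\omega \perp S$). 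Hence $M_t^{U_t}$ is $\sigma(H_{t-1},U_t)$-measurable, and its conditional mutual information with $S$ vanishes. A consequence I will reuse is
\[
P_{S\mid H_{t-1},M_t^{U_t},U_t} = P_{S\mid H_{t-1}} = P_{t-1},
\]
where the last equality is Lemma~\ref{lem:tracked-posterior} applied inductively.

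\emph{Part (i).} I condition on a realization $h=(H_{t-1},M_t^{U_t},U_t)$. Given $h$, three things hold:
\begin{itemize}
\item $S$ has law $P_{t-1}$, by the consequence of (ii);
\item the map $s\mapsto M_t^{U_t}(s)$ is fixed and deterministic;
\item $\Sigma_t$ is fixed, since by (C2) it is computed from $(P_{t-1}, M_t^{U_t}, b)$.
\end{itemize}
The noise $Z_t$ is drawn fresh, independent of $S$ given $h$. So $R_t = M_t^{U_t}(S)+Z_t$ is exactly the setting of Lemma~\ref{lem:singular}(i) with prior $P_{t-1}$, which gives $I(S;R_t\mid h)\le b$ pointwise in $h$. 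Averaging over $h$ yields (i). The singular case $a_t=1$ is covered by Lemma~\ref{lem:singular}(iii).

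\emph{Part (iii).} I apply Bayes' rule given $h$. Since $H_t = (h, R_t)$, the prior on $S$ given $h$ is $P_{t-1}$, and the likelihood of $R_t$ is the Gaussian density centered at $M_t^{U_t}(s)$ with covariance $\Sigma_t$. Lemma~\ref{lem:singular}(ii) says that the update with $\Sigma_t^+$ computes exactly this posterior on the affine support $M_t^{U_t}(s)+\mathcal{A}$. That support is common to all $s$ with positive mass, because the projection onto $\mathcal{A}^\perp$ is constant. Therefore $P_t = P_{S\mid H_t}$, and induction carries the tracked-posterior property forward. With the per-step decomposition used in the sketch of Theorem~\ref{thm:main}, the composition then goes through as in Theorem~\ref{thm:composition}.

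\emph{Main obstacle.} I expect the hardest step to be the claim in (ii) that conditioning on $U_t$ does not shift the posterior over $S$ away from $P_{t-1}$. Calibration in (i) uses $P_{t-1}$, not $P_{S\mid H_{t-1},U_t}$, so (i) is valid only if these coincide. This is exactly where (C1) and the step ordering matter: coins derived from the context, or drawn before $q_t$ is fixed, could correlate with $S$ through $H_{t-1}$. I would state carefully that (C1) implies $U_t\perp S\mid H_{t-1}$. From this, Bayes gives $P_{S\mid H_{t-1},U_t}=P_{S\mid H_{t-1}}$. Adding the deterministic $M_t^{U_t}$ to the conditioning then changes nothing further.
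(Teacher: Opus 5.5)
Your proposal is correct and follows the paper's proof essentially step for step. For (ii) you use the chain rule with (C1) and the fact that $M_t^{U_t}$ depends on $S$ only through $(H_{t-1},U_t)$; for (i) you condition on the realized $(H_{t-1},M_t^{U_t},U_t)$ and apply Lemma~\ref{lem:singular}(i) via (C2); for (iii) you use Bayes' rule with the exact conditional density of Lemma~\ref{lem:singular}(ii) and (C3). Your explicit use of (ii) to show that $S$ still has law $P_{t-1}$ under that conditioning spells out a step the paper leaves implicit.

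One remark in your closing paragraph is off. Context-derived coins are not a threat, because dependence on $S$ through $H_{t-1}$ is exactly what (C1) permits, and Remark~\ref{rem:coin-order} shows that such coins satisfy (C1)--(C3).
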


\begin{proof}[Proof sketch]
For (i), (C2) makes $s \mapsto M_t^{U_t}(s)$ deterministic once the coins are realized,
which is the precondition of the noise determination result, applied here to the
conditional mechanism whose covariance is $C_t$. For (ii), (C1) gives
$I(S; U_t \mid H_{t-1}) = 0$, and given $(H_{t-1}, U_t)$ the map $M_t^{U_t}$ is a
function of $q_t$ and the public adapters, with $q_t$ depending on $S$ only through
$H_{t-1}$ (Proposition~\ref{prop:autoregressive}). For (iii), the likelihood used is the
exact conditional likelihood given $(H_{t-1}, M_t^{U_t}, U_t)$, all components of $H_t$
by (C3). No independence between $U_t$ and $q_t$ is used anywhere.
Appendix~\ref{app:coupled}.
\end{proof}

\begin{remark}[What the coupling requires of the coins]
\label{rem:coin-order}
Conditions (C1)--(C3) are weaker than the implementation we adopt. Coins derived
deterministically from the context, $U_t = f(q_t, H_{t-1})$ for public measurable $f$,
satisfy all three: such $U_t$ is $\sigma(H_{t-1}, M_t)$-measurable, so
$I(S; M_t^{U_t}, U_t \mid H_{t-1}) = I(S; M_t \mid H_{t-1}) = 0$, giving (C1); it is
pinned once $(H_{t-1}, q_t)$ are, giving (C2); and public $f$ makes it recoverable,
giving (C3). Nor is an adversary who predicts the coins and searches for a favourable
prompt a problem, since the bound in (i) is conditional on $(M_t^{U_t}, U_t)$ and holds
for every realization. What (C1) does exclude are coins seeded from the world index,
from the realized world's logits, or from any quantity depending on $S$ other than
through $H_{t-1}$. Appendix~\ref{app:coins} gives the verification and one caveat
concerning entropy-charging variants of the accounting.
\end{remark}

\begin{remark}[Marginal fidelity]
By the Gumbel-max construction each world's vote is marginally an exact sample from its
own temperature-$\tau$, support-restricted conditional. The coupling alters the joint
law of the votes, which preserves cross-world agreement, without perturbing any world's
marginal; its effect on the estimator of Section~\ref{sec:mi-identity} is additional
variance across sessions, not bias.
\end{remark}

\subsection{Posterior Entropy and Mutual Information}
\label{sec:mi-identity}

The results above bound leakage. Because the secret space is finite and the tracked
belief is the true posterior, we can also estimate it.

\begin{proposition}[Posterior-entropy identity]
\label{prop:mi-identity}
With $S \sim \mathrm{Uniform}(\mathcal{S})$, $|\mathcal{S}| = m$, $P_T$ the belief
tracked by Algorithm~\ref{alg:pac-token}, and $H_T$ as above,
\[
  I(S; H_T) \;=\; \log m \;-\; \mathbb{E}\big[H(P_T)\big],
\]
the expectation being over the distribution of $H_T$, which includes the coins
$U_{1:T}$.
\end{proposition}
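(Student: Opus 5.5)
The identity follows from the definition of mutual information as a difference of entropies, $I(S;H_T) = H(S) - H(S\mid H_T)$. The proof then identifies each term. Since $S$ is uniform on $m$ worlds, $H(S)=\log m$. For the conditional term, write $H(S\mid H_T) = \mathbb{E}_{H_T}\big[H(P_{S\mid H_T})\big]$, the average over realized histories of the entropy of the true posterior. This holds in general, with $H_T$ possibly mixing continuous components ($R_t$, $U_t$) and discrete ones ($M_t^{U_t}$). The reason is that $S$ is discrete, so the conditional entropy of $S$ given any random element is the expected entropy of its regular conditional distribution. No densities for $H_T$ are needed, which is why the statement can average over the coins $U_{1:T}$ without further care.

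The key step is to replace the true posterior $P_{S\mid H_T}$ by the tracked belief $P_T$. I would invoke Lemma~\ref{lem:tracked-posterior} (tracked belief is the true posterior), but it was proved for the history $\{(M_i,R_i)\}$ with nonsingular covariance. Two extensions are needed, and I would handle them by induction on $t$. The inductive claim is $P_t = P_{S\mid H_t}$ almost surely, with base case $P_0=P_S$ and $H_0=\emptyset$.

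For the inductive step, split $H_t=(H_{t-1}, M_t^{U_t}, U_t, R_t)$ and apply Bayes' rule in two stages.

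\emph{First stage.} Conditioning additionally on $(M_t^{U_t},U_t)$ leaves the posterior unchanged. This follows from Proposition~\ref{prop:coupled}(ii), $I(S;M_t^{U_t},U_t\mid H_{t-1})=0$, which gives conditional independence and hence $P_{S\mid H_{t-1},M_t^{U_t},U_t}=P_{t-1}$.

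\emph{Second stage.} Given these quantities, $R_t$ has law $\mathcal{N}(M_t^{U_t}(s),\Sigma_t)$ under $S=s$, since $\Sigma_t$ is a function of $(P_{t-1},M_t^{U_t},b)$ and therefore fixed. The Bayes update under this Gaussian likelihood is exactly the pseudoinverse update. This is Lemma~\ref{lem:singular}(ii), together with Proposition~\ref{prop:coupled}(iii).

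The main obstacle is the degenerate Gaussian in the second stage. When $\Sigma_t$ is singular, or zero under unanimity, $R_t$ has no Lebesgue density, so "likelihood" must be read as a density with respect to Lebesgue measure on the affine space $e_{v}+\operatorname{range}(\Sigma_t)$. That space must be the same for every $s$, or else $R_t$ would reveal $s$ deterministically off the support.

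I would argue this directly from Lemma~\ref{lem:singular}: the component of $M_t^{U_t}(s)$ along $\mathcal{A}^\perp$ is constant in $s$. The support is therefore common to all $s$, and the densities relative to it are $\propto\exp[-\tfrac12(R_t-y_s)^\top\Sigma_t^{+}(R_t-y_s)]$ with an $s$-independent pseudo-determinant. In the unanimous case all $y_s$ coincide, so the likelihood is constant and the posterior is unchanged. This is consistent with the update, because $(R_t-y_s)$ is the same for all $s$.

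With $P_T=P_{S\mid H_T}$ almost surely, substitution gives $I(S;H_T)=\log m-\mathbb{E}[H(P_T)]$.
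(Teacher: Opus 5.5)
Your proof is correct and follows the paper's route: $I(S;H_T)=H(S)-H(S\mid H_T)$ with $H(S)=\log m$, then $H(S\mid H_T)=\mathbb{E}[H(P_T)]$ because the tracked belief equals the true posterior (Proposition~\ref{prop:coupled}(iii)). Your induction makes explicit what the paper's one-line citation of (iii) leaves implicit. It first conditions on $(M_t^{U_t},U_t)$ via Proposition~\ref{prop:coupled}(ii), and then applies the degenerate-Gaussian Bayes update with its common affine support, as in Lemma~\ref{lem:singular}(ii).
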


\begin{proof}
$I(S; H_T) = H(S) - H(S \mid H_T)$ and $H(S) = \log m$ by uniformity. By
Proposition~\ref{prop:coupled}(iii), $P_T = P_{S \mid H_T}$, so
$H(S \mid H_T) = \mathbb{E}[H(P_T)]$, the expectation being over the same history the
conditioning is on.
\end{proof}

Three points must be kept separate when this identity is used empirically, and the
measurements of Section~\ref{sec:privacy-results} observe them.
\begin{enumerate}
  \item \textbf{The identity is exact; the expectation is estimated.}
  $\mathbb{E}[H(P_T)]$ is approximated by an average over finitely many sessions, so
  what we report is a direct estimate with quantified sampling error, not an exact
  measurement of the mutual information.
  \item \textbf{The estimand is the same object throughout.} Both sides concern $H_T$,
  which contains the coins; this is the quantity bounded by Theorem~\ref{thm:main} and
  shown by Lemma~\ref{lem:token} to dominate token-channel leakage, so the estimate, the
  analytic bound, and the user-facing guarantee refer to one object rather than three.
  \item \textbf{The coins contribute variance, not leakage.} Different coin draws give
  different vote patterns and posterior trajectories; by
  Proposition~\ref{prop:coupled}(ii) they carry no information about $S$, so they widen
  the sampling distribution of the estimate without shifting its target.
\end{enumerate}

This is a stronger form of empirical verification than an attack: an attack lower-bounds
leakage by exhibiting an adversary, whereas the identity gives the leakage itself up to
Monte Carlo error. It is available here because the $m$-world construction keeps the
secret space enumerable.

%% file: sections/05_experimental_setup.tex
\section{Experimental Setup}
\label{sec:setup}

\subsection{Models, Data, and World Construction}
\label{sec:models-data}

The private universe consists of WikiText-103 articles \citep{merity2017} together with
canary records used only in Section~\ref{sec:canary-results}; a disjoint set of articles is held
out as non-members. Worlds are formed by the $m/2$ assignment of
Section~\ref{sec:threat-model}, and each world is a LoRA adapter \citep{hu2022} over a
frozen public GPT-2-small base \citep{radford2019}. Training is identical across worlds
--- same optimizer, learning rate, and fixed epoch count, with no early stopping --- so
that no world receives a configuration chosen on the basis of its own data.
Table~\ref{tab:setup} gives the full configuration.

All experiments involving the PAC world construction use a single collection,
Collection~A (construction seed $101$). The PMixED comparison of
Section~\ref{sec:pmixed-results} necessarily uses a different partition of the same universe,
described in Section~\ref{sec:baselines}. The implications of using one collection are
discussed in Section~\ref{sec:discussion}.

\begin{table}[t]
\centering
\caption{System and mechanism configuration. All mutual-information budgets are in nats.}
\label{tab:setup}
\begin{tabular}{ll}
\toprule
Component & Value \\
\midrule
Worlds & $m = 128$; each record in exactly $m/2 = 64$ \\
Prior & uniform over worlds; per-record membership prior exactly $1/2$ \\
Universe $U$ & $2{,}048$ WikiText-103 articles $+$ $16$ canaries $= 2{,}064$ \\
Held-out non-members & $512$ articles \\
Base model & GPT-2-small (124M), frozen, \texttt{fp32} \\
Adapters & LoRA $r = 8$, $\alpha = 16$, dropout $0$, $\{c_\mathrm{attn}, c_\mathrm{proj}\}$, $\approx\!0.6$M params/world \\
Training & AdamW, lr $2\times10^{-4}$, $2$ epochs, identical for every world \\
Public support & $k = 200$ (argmax and mass coverage at $k \in \{50,200,1000\}$ in Section~\ref{sec:stability-utility}) \\
Decoders & greedy; coupled Gumbel at $\tau = 1.0$ \\
Budget grid & $b \in \{2^{-4}, 2^{-8}, \dots, 2^{-32}\}$, $8$ values \\
Numerics & \texttt{float64} from vote covariance through posterior update \\
Determinism & \texttt{use\_deterministic\_algorithms(True)}, TF32 off, fixed CuBLAS workspace \\
World collection & A, construction seed $101$ \\
\bottomrule
\end{tabular}
\end{table}

\paragraph{Temperature.}
The coupling temperature interpolates between the two decoders: as $\tau \to 0$ the
Gumbel perturbation vanishes and each world's vote returns to its in-support argmax,
while larger $\tau$ makes each world's vote a more stochastic sample from its own
support-restricted conditional. All worlds share the same coins at every step, so $\tau$
controls per-world stochasticity without independently perturbing the worlds relative to
one another. We use $\tau = 1.0$ throughout; Section~\ref{sec:generation-results} reports
$\tau = 0.8$ and two alternative couplings.

\paragraph{Randomness and conditional determinism.}
What the calibration result requires is that the vote map $s \mapsto M_t(s)$ be
deterministic given the information the noise is calibrated against
(Section~\ref{sec:theory}); it does not require that inference be free of randomness.
Under greedy decoding this means the forward pass and in-support argmax must be
reproducible, so that the map is a fixed function of the context. Under coupled Gumbel
decoding the map is deterministic \emph{conditional on} the public coins $U_t$, which are
themselves legitimate randomness: they are independent of the secret, recorded in the
history, and conditioned on by the calibration and the belief update. The same is true of
the calibration noise $Z_t$, which is the mechanism's output randomness rather than part
of the map. Accordingly we fix the sources that would otherwise make the map irreproducible
--- \texttt{use\_deterministic\_algorithms(True)}, TF32 disabled, a fixed CuBLAS workspace
--- and verify that per-world votes are bitwise reproducible under fixed coins before any
privacy claim is made. Training-time nondeterminism requires no such control, since the
informed adversary is assumed to know all $m$ trained adapters.

\subsection{Evaluation Protocol}
\label{sec:eval-protocol}

Three regimes answer different questions, and are summarized in
Table~\ref{tab:eval-regimes}. \emph{Teacher-forced} evaluation supplies the true prefix as
context, so it isolates prediction quality from autoregressive drift and degeneracy;
this is where the primary utility results are measured. \emph{Per-sequence generation}
runs the mechanism autoregressively over a grid of budgets and lengths under greedy
decoding, and is where sequence-level attacks are evaluated. \emph{Persistent-secret
sessions} draw one world and serve a long stream against it, which is the setting the
composition analysis actually concerns and the only regime in which the direct
mutual-information estimate is available. The stability census reports argmax-vote
statistics and is therefore greedy; the coupled-Gumbel comparison of vote agreement
comes from the paired $T = 64$ ablation of Section~\ref{sec:generation-results}, in which prompts,
worlds, and noise seeds are held fixed and only vote formation differs. The coupling
ceiling $\beta$ is computed from the world distributions themselves and does not depend
on the decoder. Support coverage is reported in two forms, since the relevant notion differs by decoder: whether a world's unrestricted argmax survives truncation, and how much of its probability mass does.
\begin{table}[t]
\centering
\caption{Evaluation regimes. Trials draw a fresh realized world and fresh noise.}
\label{tab:eval-regimes}
\small
\setlength{\tabcolsep}{5pt}
\renewcommand{\arraystretch}{1.1}
\begin{tabularx}{\linewidth}{>{\raggedright\arraybackslash}p{2.8cm}
                                  >{\raggedright\arraybackslash}p{1.8cm}
                                  >{\raggedright\arraybackslash}X
                                  >{\raggedright\arraybackslash}p{3.2cm}}
\toprule
Regime & Decoder & Scale & Primary use \\
\midrule
Stability census & greedy & 65,053 held-out positions & unanimity, $a_t$, argmax and mass coverage, $\beta$ \\

Teacher-forced accuracy & both & 8,176 positions (16,352 with replication) & utility, noise sensitivity \\

Flatness diagnostic & both & 2,048 positions $\times$ 8 coin draws & budget--flatness relation \\

Per-sequence sweep & greedy & 8 budgets $\times$ $T \in \{16,64,256\}$; 1,000 trials (100 at $T=256$) & MIA, world identification \\

Sessions & Gumbel & $b \in \{2^{-16},2^{-20},2^{-24}\}$; 100 sessions $\times$ 4,096 tokens & direct MI, utility over long runs \\

Canary emulation & greedy and Gumbel & 2 canaries $\times$ 3 budgets + no-noise reference ; 2,000 trials per cell, each decoder & membership inference on forced memorization \\

PMixED comparison & Gumbel & 2,048 WT-103 test positions & matched-guardrail comparison \\
\bottomrule
\end{tabularx}
\end{table}

\paragraph{Realized length.}
The public stop rule halts serving once the budget is exhausted, so realized sequence
length can fall short of the nominal one; at $b = 2^{-4}$ the cap binds at $16$ tokens.
We report realized rather than nominal lengths, and mark cells in which the cap binds. A
cell whose nominal length is $256$ but whose realized length is $16$ looks artificially
healthy on every degeneracy metric, which is precisely the trap this convention avoids.

\paragraph{Uncertainty.}
Generation statistics use a sequence-level (cluster) bootstrap over $1{,}000$ resamples,
since tokens within a sequence are not independent. For sessions we report the standard
error across sessions. Cells at different horizons within one budget share sessions, so
agreement across horizons is a consistency check rather than a confidence statement; the
three budgets use independent session seeds and do give independent estimates.

\paragraph{Session protocol.}
A session consists of 64 queries against the same persistent secret.
Each query uses a fresh 32-token prompt and releases 64 tokens.
Thus, $T=4096$ denotes the cumulative number of releases across the
session rather than the length of a single GPT-2 context.

\subsection{Utility Metrics}
\label{sec:utility-metrics}

Held-out next-token accuracy under teacher forcing is the primary utility metric and the
one against which every privacy--utility statement in this paper is made. For generated
text we report rep-3 and distinct-1 to capture degeneracy and lexical diversity. Both
vary strongly with sequence length for \emph{any} text, human text included, so all
comparisons use references generated at the same length, and every baseline is evaluated
under the same decoder as the mechanism it is compared against --- scoring a sampled
mechanism against an argmax baseline charges the difference between decoders as a
privacy cost, which in our early runs amounted to roughly $12$ accuracy points of pure
artifact.

\paragraph{Why perplexity is not our primary utility metric.}
Two considerations argue against it. First, perplexity of self-generated text rewards
predictability rather than quality: a degenerate loop attains a perplexity near $1.7$,
and a more diverse model can score worse than a less diverse one. Second, and specific
to a token-release interface, the release distribution of our mechanism puts zero mass
outside the voted set $A_t$; whenever the ground-truth token falls outside $A_t$ the
target-token likelihood is exactly zero, and the corresponding NLL term is determined by
whatever numerical floor the implementation applies rather than by model quality. A
perplexity aggregated over such positions is partly a statement about the floor, and how
much so depends on the floor and on how often the case arises.

Teacher-forced accuracy therefore remains our main utility measure throughout. This does
not make perplexity useless, provided it is clear what distribution the likelihood is
taken under. Where we report it descriptively for generated text, alongside rep-3 and
distinct-1, the text is scored under a fixed \emph{reference language model} --- not
under the mechanism's token-release distribution --- so it measures how typical the
generated string is rather than how likely the mechanism was to produce it, and the
zero-mass problem above does not arise. In Section~\ref{sec:pmixed-results} we additionally
report perplexity for PMixED in the usual sense, since its release is a full
distribution over the vocabulary and target-token NLL is well defined under it.

\subsection{Privacy Metrics and Attacks}
\label{sec:privacy-metrics}

For each $(b, T)$ we report the bound implied by Theorem~\ref{thm:main}, a function of
the total budget and the prior only, at the two relevant priors: $1/2$ for membership of
an individual record and $1/m$ for identification of the realized world. Empirically,
record-level membership inference uses the Bayes-optimal rule under the tracked
posterior --- a record is declared a member when the posterior mass of worlds containing
it exceeds one half --- and world identification takes the posterior mode. Both attacks
read the curator's own posterior, which by Proposition~\ref{prop:coupled}(iii) is the
true posterior given the internal history; they are an upper envelope on any adversary
restricted to the token stream rather than a realistic attacker model, and we report
them as such.

\paragraph{Direct mutual-information estimation.}
Sessions estimate $I(S; H_T)$ through Proposition~\ref{prop:mi-identity},
$I(S; H_T) = \log m - \mathbb{E}[H(P_T)]$, by averaging posterior entropy over sessions.
The identity is exact; the expectation is estimated from finitely many sessions, so we
report a direct estimate with its sampling error and never describe it as an exact
measurement of mutual information. By Lemma~\ref{lem:token} the estimate upper-bounds
the leakage reaching the user through the token channel.

\paragraph{Canary protocol.}
The $16$ canaries are of two kinds --- out-of-distribution sequences (C-OOD) and
sequences built from common tokens (C-COMMON) --- injected at repetition counts from $1$
to $64$. Organic memorization is measured by comparing member and non-member worlds at
the canary trigger in log-probability, rank, and vote share. A separate capacity pilot
trains single canary-bearing adapters against clean controls on identical data, at
$r = 8$ with $2$ epochs and at $r = 32$ with $4$ epochs, with repetitions up to $4{,}096$,
to establish how much repetition would be needed for a canary to reach the argmax. The
controlled experiment of Section~\ref{sec:canary-results} instead \emph{forces}
perfect memorization into exactly the member worlds at inference time, leaving weights
untouched, and measures the resulting membership advantage over $2{,}000$ trials for each of two C-COMMON canaries at $b \in \{2^{-4}, 2^{-8}, 2^{-16}\}$ under both decoders, against a no-noise reference in which the memorization is not privatized at all. Each trial releases a single token at the canary trigger and records whether it is the canary payload; we report the advantage with its standard error ($\approx 0.022$ at $2{,}000$ trials).

\subsection{Baselines}
\label{sec:baselines}

Three reference points bracket the privacy--utility question. The \emph{public} baseline
is the frozen base model with no private adaptation. The \emph{full-$U$} baseline is a
single adapter trained on all $2{,}048$ articles, giving the non-private ceiling. The
\emph{$b = \infty$} baseline runs the full mechanism with no calibration noise. The two
gaps are then separately meaningful: full-$U$ to $b = \infty$ is subsampling error, and
$b = \infty$ to a finite budget is noise cost. Generation experiments additionally use
human continuations and decoder-matched public and non-private continuations at each
length.

\paragraph{PMixED.}
Section~\ref{sec:pmixed-results} compares against PMixED \citep{flemings2024}, the existing
private-prediction method for language models, implemented from the authors' own ensemble construction:
their RD-mollification mechanism, Poisson-amplified RDP accounting,
$\beta$ selection from the per-query privacy constraint, and
$\lambda$ selection by bisection via Eq.~5.. Its privacy argument requires each record to
influence exactly one expert, so the $m/2$ worlds cannot be reused; we partition the same
$2{,}064$-record universe into $N \in \{8, 16, 32\}$ disjoint shards and train with the
authors' schedule. The implementation is validated against the authors' default
configuration before any comparison is drawn (Section~\ref{sec:pmixed-results}).
We evaluate both methods on the official WikiText-103 test split, which is disjoint from the training data used for either comparison model. Both methods are compared at matched provable
membership-inference bounds, under both accounting conventions discussed in
Section~\ref{sec:pmixed-results}.

\subsection{Computational Cost}
\label{sec:cost}
Collapsing calibration onto the distinct votes removes the dependence on vocabulary
size. Calibrating and sampling Gaussian noise over the full vocabulary costs
$O(md\min(m,d) + d^{2})$ operations per release \citep{zsd}, which at $m = 128$ worlds
and $d = |\mathcal{V}| = 50{,}257$ is $3.35 \times 10^{9}$. Restricting to the $k = 200$
candidate set and then to the $a_t$ distinct votes reduces this to
$O(mk + a_t^{3} + m a_t)$, or $2.59 \times 10^{4}$ at the typical $a_t = 2$ measured in
Section~\ref{sec:stability-utility} --- a factor of $1.3 \times 10^{5}$ fewer operations.

Both expressions exclude the $m$ model evaluations themselves, and that term is what
dominates in practice. On a single NVIDIA H100 NVL (94\,GB) with PyTorch 2.7.0 and CUDA
12.6, the batched 128-world forward pass takes 29.71\,ms per released token under coupled
Gumbel decoding, while the entire mechanism --- vote collapse, calibration, noise
sampling and the posterior update --- adds 0.223\,ms, or 0.74\% of the total. Throughput
is 33.4 tokens/s at a peak GPU memory of 2.23\,GB.

The cost of privacy here is therefore the cost of evaluating an ensemble, not of
privatizing its output. Serving 128 worlds is $5.0\times$ slower than a single
non-private forward pass (5.92\,ms), which is $25.5\times$ cheaper than evaluating the
worlds sequentially; batching, rather than the privacy mechanism, is what makes $m = 128$
practical.

%% file: sections/06_results.tex
\section{Results}
\label{sec:results}

All results using the PAC world construction use Collection~A; PMixED uses its required
disjoint-shard ensemble described in Section~\ref{sec:baselines}. Greedy and coupled
Gumbel are not treated as interchangeable utility baselines; direct cross-decoder
comparisons are restricted to explicitly paired decoding ablations and robustness
checks.

\subsection{Ensemble Stability and Utility}
\label{sec:stability-utility}

\paragraph{How often do the worlds agree?}
Table~\ref{tab:census} reports the stability census over held-out positions. At
$k = 200$, all 128 worlds cast the same greedy vote at 83.4\% of positions, the median
number of distinct votes is 1 and the 95th percentile is 2. Support coverage is reported
in both forms. Across world-position pairs, 98.6\% of unrestricted world argmaxes fall
inside $V_t$; but the support retains only 88.5\% of a world's probability mass on
average (median 93.1\%, 5th percentile 61.5\% over the 20{,}000 per-position records
stored by the census, whose mean matches the full-set mean to $4\times10^{-4}$).

\begin{table}[t]
\centering
\caption{Stability census, greedy votes. $n = 65{,}053$ positions ($19{,}982$ at
$k = 1000$). Coverage figures are averaged over world-position pairs: argmax coverage is
the fraction of world argmaxes falling inside $V_t$, mass coverage the share of a
world's distribution retained by $V_t$. $\beta = \sum_v \min_s p_s(v)$ is the maximum
probability that all worlds agree under any coupling.}
\label{tab:census}
\begin{tabular}{lccccccc}
\toprule
$k$ & unanimity & median $a_t$ & p95 $a_t$ & argmax cov. & mass cov. & $\beta$ (mean) \\
\midrule
50   & 0.835 & 1 & 2 & 0.974 & 0.799 & 0.881 \\
200  & 0.834 & 1 & 2 & 0.986 & 0.885 & 0.878 \\
1000 & 0.827 & 1 & 2 & 0.995 & 0.953 & 0.874 \\
\bottomrule
\end{tabular}
\end{table}

The two coverage notions diverge because the support is chosen by the public model:
truncation almost never removes a world's single preferred token, but it removes an
appreciable share of the tail. For the greedy decoder only the first matters; for
sampling, the second is the relevant figure, and $k = 200$ retains 88.5\% of mass
against 95.3\% at $k = 1000$. We use $k = 200$ throughout, so this is a cost we pay
rather than one we avoid.

\paragraph{What does privacy cost in accuracy?}
Table~\ref{tab:accuracy} decomposes held-out next-token accuracy into the gain from
private data, the cost of subsampling, and the cost of noise. The mechanism retains 74.3\% of the fine-tuning gain under greedy and 84.4\% under
coupled Gumbel at $b = 2^{-32}$, a budget that bounds
membership-inference success below 51.08\% after $10^6$ released tokens.

\begin{table}[t]
\centering
\caption{Teacher-forced next-token accuracy (\%). Greedy $n = 8{,}176$ positions,
Gumbel $n = 16{,}352$. Decoders are not comparable to each other. Retained gain is
(PAC $-$ public)/(full-$U$ $-$ public).}
\label{tab:accuracy}
\begin{tabular}{lcc}
\toprule
& greedy & coupled Gumbel \\
\midrule
public GPT-2                & 38.16 & 25.56 \\
full-$U$ (non-private)      & 43.92 & 31.49 \\
$b = \infty$ (subsampled, no noise) & 42.81 & 	30.68 \\
PAC at $b = 2^{-32}$        & 42.44 & 30.57 \\
\midrule
fine-tuning gain            & $+5.76$ & $+5.93$ \\
subsampling cost            & $-1.11$ & $-0.81$ \\
noise cost                  & $-0.37$ & $-0.12$ \\
retained gain               & 74.3\% & 84.4\% \\
\bottomrule
\end{tabular}
\end{table}

Accuracy is flat across the entire budget sweep: 42.43--42.87\% (greedy) and
30.50--30.71\% (Gumbel) from $b = 2^{-4}$ to $b = 2^{-32}$,\footnote[2]{
The same Gumbel votes are used across budgets, so the comparison isolates the
effect of calibration noise. In the wider sweep, $b = 2^{-2}, 2^{0}, 2^{2},
2^{4}$ give $30.78$, $30.69$, $30.68$, and $30.68\%$, respectively.
At $b = 2^{4}$, the released token matches the $b=\infty$ release at every
position, and the two accuracies agree to five decimal places ($30.68126\%$).
} against standard errors of
0.55 and 0.36 points respectively, with no trend in $b$. The fine-tuning gain is similar
under the two decoder-matched evaluations, $+5.76$ and $+5.93$ points.

The noise cost is $-0.12$ points under Gumbel and $-0.37$ under greedy, both well inside
one standard error ($0.36$ and $0.55$ points), so neither is a measurable cost.

\paragraph{Effect of the privacy budget}
Flatness over a $2^{28}$ range invites the objection that $b$ never reaches the
mechanism. Figure~\ref{fig:flatness} and Table~\ref{tab:flatness} refute it directly. The calibrated noise scale
spans four orders of magnitude and tracks $1/\sqrt{b}$; flips relative to the noiseless
release rise from 26\% to 52\% on positions where the worlds disagree. Yet overall
accuracy moves by $-0.0012$.
\begin{figure}[t]
\centering
\includegraphics[width=\linewidth]{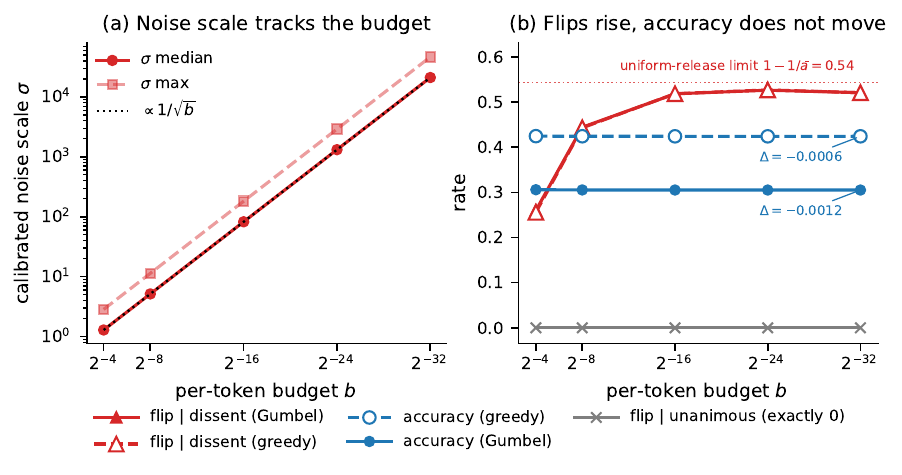}
\caption{The budget drives the mechanism, but accuracy does not follow. (a) Calibrated
noise scale against the per-token budget, with a $1/\sqrt{b}$ reference line. (b) Over the
same range, flips on disagreeing positions rise from 26\% to 52\% and saturate near the
uniform-release limit $1-1/\bar{a}$ (dotted), while flips on unanimous positions stay
exactly zero and accuracy is almost unchanged; both decoders shown.}
\label{fig:flatness}
\end{figure}
\begin{table}[t]
\centering
\caption{Budget-flatness diagnostic, coupled Gumbel, $2{,}048$ teacher-forced positions
$\times$ 8 coin draws. ``flip'' is disagreement with the $b = \infty$ release (This diagnostic uses a separate 2,048-position subset, so absolute
accuracy levels are not directly comparable to Table~\ref{tab:accuracy}).}
\label{tab:flatness}
\begin{tabular}{lcccccc}
\toprule
$b$ & $\sigma$ median & $\sigma$ max & accuracy & flip (all) & flip $\mid$ unanimous & flip $\mid$ dissent \\
\midrule
$\infty$   & 0        & 0        & 0.3063 & 0.0000 & 0.0000 & 0.0000 \\
$2^{-4}$   & 1.29     & 2.84     & 0.3058 & 0.0419 & 0.0000 & 0.2600 \\
$2^{-8}$   & 5.15     & 11.3     & 0.3053 & 0.0715 & 0.0000 & 0.4442 \\
$2^{-16}$  & 82.3     & 182      & 0.3051 & 0.0835 & 0.0000 & 0.5181 \\
$2^{-24}$  & $1.32\times10^{3}$ & $2.91\times10^{3}$ & 0.3051 & 0.0848 & 0.0000 & 0.5262 \\
$2^{-32}$  & $2.11\times10^{4}$ & $4.65\times10^{4}$ & 0.3051 & 0.0839 & 0.0000 & 0.5206 \\
\bottomrule
\end{tabular}
\end{table}

Three measured facts account for the flatness, and their arithmetic closes exactly.
Unanimous positions --- 83.9\% of the diagnostic set --- have zero output variance, draw
no noise, and flip at a rate of exactly $0.0000$ at every budget. On the remaining
positions the flip rate saturates near the uniform-release limit $1 - 1/\bar{a}$ (mean
$\bar{a} = 2.19$ gives 0.543 against an observed plateau of 0.52), so tightening $b$
past $2^{-16}$ cannot flip more. And flips are close to accuracy-neutral, because every
candidate in the voted set is some world's vote: dissent-position accuracy moves only
from 0.1473 to 0.1400. A separate diagnostic explains this flatness. When dissent positions are split by whether
the ensemble majority is correct, tightening $b$ from $\infty$ to $2^{-32}$ changes accuracy
by $-34.1$ points on the majority-correct subset and $+31.8$ points on the majority-wrong
subset, with both approaching the uniform-release limit $1/\bar{a} \approx 0.45$.
These opposite effects largely cancel. The balance is reflected by
$f=\Pr[\text{majority correct}\mid\text{dissent, true token voted}]
=0.520\pm0.018$, close to the $0.5$ cancellation point and far from the
$f\approx1$ expected for a high-accuracy classifier ensemble. In addition, $70\%$ of
dissent positions have no vote for the true token and therefore remain incorrect at every budget. The decomposition
$\Delta\text{acc} = \text{dissent fraction} \times \Delta\text{acc}\!\mid\!\text{dissent}$
gives $0.1611 \times (-0.0072) = -0.00116$, matching the observed $-0.00116$. Greedy
behaves the same way (dissent fraction 0.1460, overall $\Delta\text{acc} = -0.00060$,
dissent flips $0.256 \to 0.521$).

\subsection{Privacy Under Autoregressive Composition}
\label{sec:privacy-results}

\paragraph{Per-sequence attacks.}
Across all 24 cells of the greedy sweep we observe no statistically significant violation
of the provable bound (Table~\ref{tab:sweep}).  Record-level MIA is substantially less sensitive than world identification.
For $b\le 2^{-8}$ it remains near chance, while at $b=2^{-4}$ it rises to
approximately $0.52$. World identification is the more sensitive metric,
reaching about $5\times$ the $1/128$ prior at the loosest budget. This is structural rather than surprising: a record lies in exactly 64
of 128 worlds, so membership is a coarse function of world identity.

\begin{table}[t]
\centering
\caption{Per-sequence sweep, greedy, 1{,}000 trials (100 at $T = 256$). Representative
cells. Realized length is shorter than nominal wherever the stop rule binds, marked
$\dagger$.}
\label{tab:sweep}
\begin{tabular}{llccccc}
\toprule
$T$ & $b$ & realized len. & world-ID (bound) & MIA (bound) & unanimity \\
\midrule
64  & $2^{-4}$  & 16$^\dagger$ & 0.024 (0.894) & 0.519 (1.000) & 0.827 \\
64  & $2^{-8}$  & 64  & 0.010 (0.130) & 0.508 (0.838) & 0.888 \\
64  & $2^{-16}$ & 64  & 0.007 (0.012) & 0.501 (0.522) & 0.887 \\
64  & $2^{-32}$ & 64  & 0.011 (0.008) & 0.500 (0.500) & 0.886 \\
256 & $2^{-8}$  & 256 & 0.010 (0.337) & 0.507 (1.000) & 0.961 \\
\bottomrule
\end{tabular}
\end{table}

Two readings require care. Apparent excesses at tight budgets --- world-ID 0.011 against
a bound of 0.008 --- are Monte Carlo noise around the $1/128 = 0.0078$ prior, which the
bound collapses onto as $b$ tightens; the defensible statement is that empirical world
identification is indistinguishable from prior guessing at $b \le 2^{-12}$. And measured
unanimity rises with sequence length (0.825, 0.887, 0.961 at $T = 16, 64, 256$), which is
a consequence of greedy degeneracy rather than of the mechanism; see
Section~\ref{sec:generation-results}.

\paragraph{Direct estimation of the leakage.}
Under a persistent secret we estimate $I(S; H_T)$ from posterior entropy via
Proposition~\ref{prop:mi-identity} rather than probing it with an attack.
Table~\ref{tab:session} reports 15 cells: three budgets spanning $256\times$, five
horizons spanning $64\times$, 100 sessions each.

\begin{table}[t]
\centering
\caption{Persistent-secret sessions, coupled Gumbel, 100 sessions per budget. Measured
$I(S;H_T) = \log m - \mathbb{E}[H(P_T)]$, estimated from the session average.}
\label{tab:session}
\begin{tabular}{llccccc}
\toprule
$b$ & $T$ & charged (nats) & measured $I$ & measured/charged & MIA (bound) & world-ID (bound) \\
\midrule
$2^{-16}$ & 64   & $9.77\times10^{-4}$ & $1.83\times10^{-4}$ & 18.8\% & 0.504 (0.522) & 0.000 (0.012) \\
$2^{-16}$ & 1024 & $1.56\times10^{-2}$ & $2.67\times10^{-3}$ & 17.1\% & 0.503 (0.588) & 0.000 (0.028) \\
$2^{-16}$ & 4096 & $6.25\times10^{-2}$ & $1.08\times10^{-2}$ & 17.3\% & 0.506 (0.675) & 0.030 (0.056) \\
$2^{-20}$ & 4096 & $3.91\times10^{-3}$ & $6.72\times10^{-4}$ & 17.2\% & 0.502 (0.544) & 0.010 (0.017) \\
$2^{-24}$ & 4096 & $2.44\times10^{-4}$ & $4.23\times10^{-5}$ & 17.3\% & 0.497 (0.511) & 0.010 (0.010) \\
\bottomrule
\end{tabular}
\end{table}

The ratio of measured to charged leakage is 17.0--18.8\% across all 15 cells. At
$T = 4096$ the three budgets use independent session seeds and give three independent
estimates of the same ratio: 17.311\%, 17.209\% and 17.318\%, with a standard deviation
of 0.061 percentage points. Leakage does accumulate --- world identification reaches
0.030 at $b = 2^{-16}$, $T = 4096$, about 2.5 standard errors above the $1/128$ prior ---
while remaining under bound.

Two qualifications carry through from Section~\ref{sec:theory}. What is estimated is
$I(S; H_T)$, the leakage of the internal history, which upper-bounds the leakage of the
released token stream; we do not estimate the gap between the two. And the identity is
exact while the expectation is not, so these are direct estimates with sampling error,
not exact measurements. The spread across cells within a budget is a consistency check,
since those cells share sessions; only the three $T = 4096$ estimates are mutually
independent.

\paragraph{Where the over-charge comes from.}
Measured leakage is constant at roughly 17\% of the charged budget over a $256\times$
range in $b$ and a $64\times$ range in $T$. Fixed-$b$ accounting therefore over-charges
by a factor of $1/0.173 \approx 5.8$. The natural explanation is that a unanimous token
has zero output variance, hence zero noise and zero conditional leakage, but is charged
$b$ regardless --- which predicts a ratio equal to the dissent fraction. The dissent
fraction under coupled Gumbel is 16.8--17.4\% in the paired $T = 64$ ablation and 16.1\%
on teacher-forced positions, bracketing the measured 17.3\%. We report this as a
consistent account rather than a demonstrated mechanism: the session runs do not record
per-token unanimity, so the two quantities are measured on different protocols and are
not matched.

\subsection{Generation Quality and Decoding}
\label{sec:generation-results}

\paragraph{Greedy collapse is inherited, not caused.}
Figure~\ref{fig:degeneracy} tracks repetition and diversity against length-matched
references. At $T = 256$ the PAC-greedy mechanism produces text with rep-3 of 0.825 ---
but so does the public model (0.831) and the non-private fine-tuned model (0.836). The
mechanism is marginally \emph{less} repetitive than either baseline.
Loop absorption is a property of greedy decoding of GPT-2, not of privatization.
\begin{figure}[t]
\centering
\includegraphics[width=\linewidth]{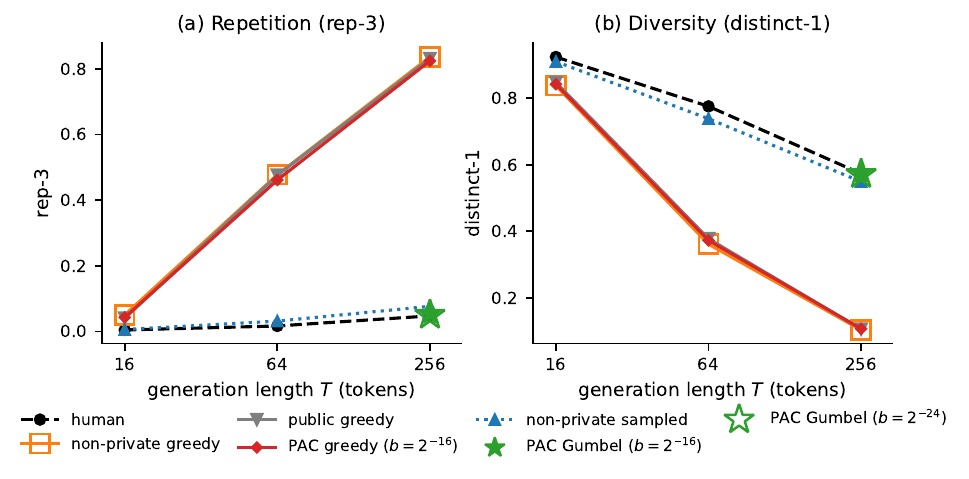}
\caption{Degeneracy is a property of the decoder, not the mechanism. Repetition (a) and
diversity (b) against generation length, all references generated at matched length:
PAC-greedy ($b=2^{-16}$) is indistinguishable from the public and non-private greedy
baselines, while PAC under coupled Gumbel decoding, shown at $T=256$ for
$b=2^{-16}$ and $2^{-24}$, lands on the human curve.}
\label{fig:degeneracy}
\end{figure}

Degeneracy interacts with the accounting in a way worth flagging for anyone
benchmarking private generation. Once the text cycles, all worlds agree, no noise is
drawn, and the measured privacy cost falls: the perplexity ratio against a length-matched
reference declines from 1.13--1.19 at $T = 16$ to 1.06--1.09 at $T = 256$ under greedy
decoding, while measured unanimity rises from 0.825 to 0.961. A low apparent privacy cost
at long generation lengths can be an artifact of degeneracy.

\paragraph{Coupling removes it at unchanged accounting.}
Under coupled Gumbel at $T = 256$, rep-3 falls to 0.053 and 0.050 at $b = 2^{-16}$ and
$2^{-24}$ --- within 0.006 of human text (0.047) --- and distinct-1 reaches 0.574 and
0.571 against human 0.573. The perplexity ratio against the length-matched sampled
reference is 1.11--1.16, the same range as at $T = 64$, so the measured cost no longer
grows with sequence length once degeneracy is removed. Section~\ref{sec:coupled-decoding}
establishes that this change leaves the privacy accounting untouched.

\paragraph{Which coupling.}
Table~\ref{tab:ablation} compares vote-formation rules in a paired design at $T = 64$,
with prompts, worlds and noise seeds held fixed. Temperature is a real knob:
$\tau = 0.8$ raises unanimity from 0.826--0.832 to 0.840--0.845 and cuts the measured
cost to $1.02\times$ against its own matched reference, but costs diversity (distinct-1
0.67 against 0.75--0.76 at $\tau = 1.0$). Under fixed-$b$ accounting the extra unanimity
buys no budget, so we use $\tau = 1.0$. The hybrid rule is rejected on a different
ground: it attains comparable unanimity but its dissent is far deeper --- mean 5.0--5.7
distinct votes with a 95th percentile of 28--36 --- so when it misses the unanimous
branch, the vote scatters. Agreement frequency and dissent depth are separate
quantities, and only the second is expensive under any entropy-sensitive accounting.

\begin{table}[t]
\centering
\caption{Coupling ablations, $T = 64$, paired design, 50 trials per cell, at
$b = 2^{-8}$ / $2^{-16}$. Perplexity ratios are against each variant's own
decoder-matched reference.}
\label{tab:ablation}
\begin{tabular}{lccccc}
\toprule
variant & unanimity & $a \mid$ dissent & p95 $a$ & PPL ratio & rep-3 / distinct-1 \\
\midrule
greedy              & 0.887 / 0.880 & 2.20 / 2.20 & 3 / 3   & 1.11 / 1.28 & 0.430 / 0.391 \\
Gumbel $\tau{=}1.0$ & 0.826 / 0.832 & 2.17 / 2.20 & 3 / 3   & 1.16 / 1.10 & 0.022 / 0.750 \\
Gumbel $\tau{=}0.8$ & 0.840 / 0.845 & 2.17 / 2.16 & 3 / 3   & 1.02 / 1.02 & 0.047 / 0.681 \\
hybrid ($\beta{\ge}0.9$) & 0.837 / 0.834 & 4.97 / 5.71 & 28 / 36 & 1.15 / 1.14 & 0.020 / 0.756 \\
\bottomrule
\end{tabular}
\end{table}

\subsection{Canary Stress Tests}
\label{sec:canary-results}

\paragraph{Memorization does not arise on its own.}
At the canary trigger, member and non-member worlds are statistically indistinguishable:
across all repetition counts up to 64, log-probability gaps lie within $\pm 0.12$, mean ranks agree to within two positions for C-COMMON
(87--157 for both) and are very similar for C-OOD
($\approx26,000$--$33,000$ for both), and no world ever votes for the canary token. C-OOD canaries are never in the
public support at all, so they cannot be released regardless of what the adapters learned.
A dedicated capacity pilot pushes further: at rank 32 and 4 epochs, raising repetitions
from 1{,}024 to 4{,}096 moves the log-probability gap from 0.33 to 1.14 and the median
rank from 62.5 to 52.5, but the fraction of positions at which the canary becomes the
argmax remains exactly 0. In this regime a canary must be injected to be studied.

\paragraph{Forced memorization.}
We therefore force perfect memorization into exactly the member worlds at inference time.
Table~\ref{tab:canary} shows the result under both decoders. Without noise, membership is
essentially perfectly recoverable, with an advantage of 0.999--1.000. With calibration,
the advantage collapses: by $b = 2^{-16}$ it is statistically indistinguishable from zero
under both decoders.

\begin{table}[t]
\centering
\caption{Forced-memorization membership advantage, $P(\text{emit} \mid \text{member})
- P(\text{emit} \mid \text{non-member})$, for two C-COMMON canaries, 2{,}000 trials per
cell. Standard error $\approx 0.022$, so the 95\% interval is roughly $\pm 0.044$;
``n.s.'' marks an advantage whose interval contains zero.}
\label{tab:canary}
\begin{tabular}{lcc}
\toprule
$b$ & greedy & coupled Gumbel \\
\midrule
no noise  & $+1.000$ / $+1.000$ & $+0.999$ / $+1.000$ \\
$2^{-4}$  & $+0.281$ / $+0.260$ & $+0.281$ / $+0.261$ \\
$2^{-8}$  & $+0.092$ / $+0.120$ & $+0.058$ / $+0.109$ \\
$2^{-16}$ & $-0.003$ / $+0.006$ (n.s.) & $-0.017$ / $-0.007$ (n.s.) \\
\bottomrule
\end{tabular}
\end{table}

The construction suggests why: in this controlled single-query setting, where the
posterior is still uniform, a record memorized by exactly $m/2$ worlds splits the vote
evenly, which is the maximum-variance configuration and therefore the one at which
calibration injects the most noise. The argument is specific to that setting and does not
extend to an arbitrary evolved posterior. The leakage at $b = 2^{-4}$ is real but remains
within its single-query bound, and that budget is unusable in any case, since it exhausts
the total budget after 16 tokens.

\paragraph{Inference protection is not content protection.}
At $b = 2^{-16}$ the two guarantees separate cleanly. Membership advantage is
statistically indistinguishable from zero under both decoders, yet the canary payload is
still emitted at close to the same rate under both membership hypotheses: 48.9--51.3\%
when the realized world is a member and 50.6--51.2\% when it is not. The mechanism makes
the release uninformative about membership; it does not prevent the string from
appearing. Public-support restriction does block off-support emission structurally ---
C-OOD canaries are never emitted at any budget --- but that is a statement about tokens
outside $V_t$, not about in-support sequence reproduction.

\subsection{Comparison with PMixED}
\label{sec:pmixed-results}

\paragraph{What is being compared.}
PMixED is the closest differentially private baseline to our setting: it also serves
next-token predictions from an ensemble fine-tuned on a private corpus, and it also mixes
those predictions with a public model. It differs from us in three ways. It requires each
record to influence exactly one expert, so its ensemble is a disjoint partition rather
than our overlapping worlds. It privatizes by moving each expert's output distribution
toward the public model and averaging, rather than by adding noise calibrated to
disagreement. And it subsamples experts per query, using the resulting amplification to
lower its accounted cost.

The two guarantees are also different in kind. PMixED reports
$(\varepsilon, \delta)$-differential privacy, which bounds the influence of any record in
the worst case over datasets; we bound mutual information under a stated input
distribution. DP is the stronger and more conservative statement. To compare them at all,
we convert both to the same operational quantity --- a provable upper bound on
membership-inference success --- and evaluate each method at the configuration that
achieves that bound.

\paragraph{Our setting versus theirs.}
PMixED's reported configuration uses $N = 80$ experts at subsampling rate $q = 0.03$,
with ablations to $N = 100$ \citep{flemings2024}. Applying those ensemble sizes to our corpus would leave very few articles per expert: disjoint sharding of a 2{,}048-article universe at $N = 80$ leaves about 26
articles per expert. We therefore restrict the shard count to $N \in \{8, 16, 32\}$ and
search over it at every operating point. The subsampling rate is not restricted; we sweep
it down to $10^{-4}$, well below the value PMixED reports, so that the accountant
is free to buy as much amplification as it wants. We note below which way the shard-count
restriction cuts.

\paragraph{Implementation validation.}
Before comparing, we check that our implementation reproduces PMixED's behaviour. We run
it unamplified ($q = 1.0$) at $\varepsilon_G = 2$, $T = 1024$, $\alpha = 2$, $E = 10$,
which is the strictest setting we evaluate and therefore the hardest place to show a
gain. Table~\ref{tab:pmixed-val} reports the result. Accuracy improves over the public
model at every shard count and the confidence interval excludes zero in each case, so the
private experts are contributing. The mean mixing weight $\lambda$ is 0.15--0.33, rising
to 0.75 once $\alpha$ and $q$ are tuned, confirming that the mollification step is active
rather than collapsing to the public model. No probability ever hits the numerical floor,
so the reported perplexities are real. Perplexity falls 5.6\% below public here and
11.9\% with $\alpha$ and $q$ tuned, against the 17.3\% the authors report on the full
corpus \citep{flemings2024}: the same direction at smaller magnitude, which is what a
2{,}048-article universe predicts.

\begin{table}[t]
\centering
\caption{PMixED validation, unamplified ($q = 1.0$) at $\varepsilon_G = 2$, $T = 1024$,
$\alpha = 2$, $E = 10$. Public baseline: accuracy 0.2438, perplexity 27.75. Headroom is
the share of that configuration's own non-private ceiling --- the plain average of all $N$
experts with no privatization --- that survives.}
\label{tab:pmixed-val}
\begin{tabular}{lccccc}
\toprule
$N$ & accuracy & perplexity & gain vs.\ public (95\% CI) & mean $\lambda$ & headroom \\
\midrule
8  & 0.2459 & 27.19 & $+0.0021$ $[+0.0019, +0.0024]$ & 0.151 & 3.7\% \\
16 & 0.2471 & 26.68 & $+0.0033$ $[+0.0029, +0.0037]$ & 0.221 & 6.9\% \\
32 & 0.2480 & 26.20 & $+0.0042$ $[+0.0036, +0.0048]$ & 0.334 & 11.8\% \\
\bottomrule
\end{tabular}
\end{table}

\paragraph{Two readings of PMixED's budget.}
The official PMixED implementation converts the target
$(\varepsilon,\delta)$-DP guarantee into the corresponding RDP budget before
applying its accountant. We use this implementation-consistent conversion for
the primary comparison. As a sensitivity analysis, we also report a more
generous convention that sets the internal RDP budget equal to the corresponding
DP $\varepsilon$, thereby granting PMixED a larger effective budget. We refer
to these as the rigorous and generous conventions, respectively. The generous
results are more favourable to PMixED and are quoted wherever a single number
must stand for PMixED.

\begin{figure}[t]
\centering
\includegraphics[width=\linewidth]{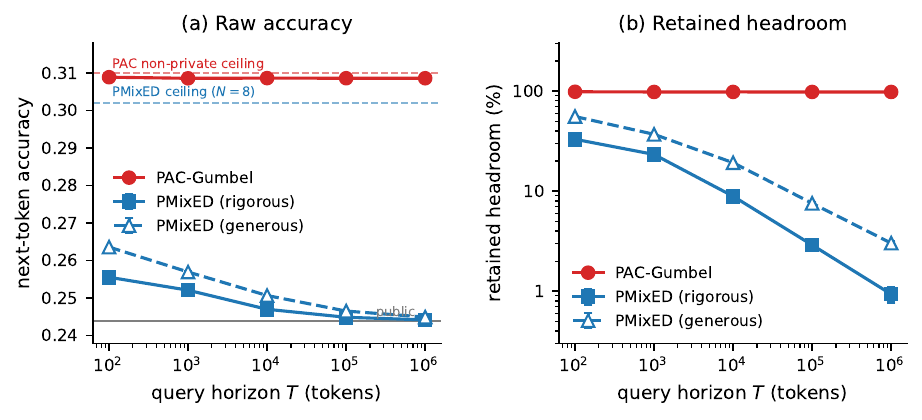}
\caption{PAC versus PMixED at a matched membership-inference bound
($\varepsilon \approx 1$), on the same data universe and held-out test set. At every
horizon, PMixED is evaluated over all combinations of shard count $N \in \{8, 16, 32\}$,
subsampling rate $q$ swept down to $10^{-4}$, and R\'enyi order $\alpha$, and the
best-performing configuration subject to the privacy target is the one plotted; the
selected configuration therefore varies along the curve, and is listed in
Table~\ref{tab:pmixed-full}.
\textbf{(a)} Raw next-token accuracy versus query horizon. Reference lines show the
public baseline, the PAC-Gumbel non-private ceiling (0.3099), and the highest
evaluated PMixED non-private ceiling (0.3021 at $N=8$). The PMixED configurations
selected by the privacy-constrained search use $N=32$, whose non-private ceiling
is 0.2793. PMixED approaches the public baseline as the horizon grows, whereas
PAC-Gumbel remains close to its non-private ceiling. \textbf{(b)} The same results normalized
by the non-private ceiling of the configuration selected at each point. Note the
logarithmic vertical scale. Error bars are standard errors over 16 subsampling draws per
cell and are smaller than the markers. Exact values and selected configurations are in
Table~\ref{tab:pmixed-full}.}
\label{fig:pmixed}
\end{figure}

Table~\ref{tab:pmixed-full} gives the numbers behind Figure~\ref{fig:pmixed}. The
matched operating point is a provable membership-inference bound of $0.7311$, which
corresponds to $\varepsilon \approx 1.0$ under the tight RDP-to-DP conversion and to a
total PAC budget of $0.111$ nats; the per-token budget $b$ is that total divided by the
horizon. For PMixED we report the configuration its own search selected at each point,
since the shard count $N$, the R\'enyi order $\alpha$ and the subsampling rate $q$ are all
free and are chosen to maximize accuracy subject to the privacy target. Headroom is
computed against the non-private ceiling of that same selected configuration; those
ceilings are $0.3021$, $0.2915$ and $0.2793$ at $N = 8$, $16$ and $32$ respectively,
against $0.3099$ for PAC-Gumbel and a public baseline of $0.24379$. The column
$f_\emptyset$ is the fraction of queries on which Poisson subsampling returned an empty
expert set, so that the query was answered by the public model alone; $\bar\lambda$ is the
mean mixing weight over the queries that were answered privately.

\begin{table*}[t]
\centering
\caption{Exact values for the matched comparison of Figure~\ref{fig:pmixed}
($\varepsilon \approx 1$). Accuracies are next-token accuracy on the held-out test
positions; headroom is the share of the selected configuration's own non-private ceiling
that survives privatization. PMixED columns give the configuration chosen by its own
search at that operating point. Uncertainties are standard errors over 16 subsampling
draws.}
\label{tab:pmixed-full}
\small
\setlength{\tabcolsep}{3pt}
\renewcommand{\arraystretch}{1.05}
\resizebox{\linewidth}{!}{%
\begin{tabular}{llcc ccccccc ccccccc}
\toprule
& & \multicolumn{2}{c}{PAC-Gumbel}
& \multicolumn{7}{c}{PMixED (rigorous)}
& \multicolumn{7}{c}{PMixED (generous)} \\
\cmidrule(lr){3-4}\cmidrule(lr){5-11}\cmidrule(lr){12-18}
$T$ & $b$ (nats) & acc. & head.\ (\%)
& acc. & $N$ & $\alpha$ & $q$ & $f_\emptyset$ & $\bar\lambda$ & head.\ (\%)
& acc. & $N$ & $\alpha$ & $q$ & $f_\emptyset$ & $\bar\lambda$ & head.\ (\%) \\
\midrule
$10^2$ & $1.11\times10^{-3}$ & 0.3089 & 98.4
& 0.25550 & 32 & 16 & 0.02   & 0.524 & 0.626 & $33.0 \pm 0.8$
& 0.26355 & 32 & 2  & 0.05   & 0.190 & 0.927 & $55.7 \pm 0.3$ \\
$10^3$ & $1.11\times10^{-4}$ & 0.3086 & 98.0
& 0.25205 & 32 & 32 & 0.01   & 0.728 & 0.743 & $23.3 \pm 0.7$
& 0.25691 & 32 & 2  & 0.03   & 0.373 & 0.887 & $37.0 \pm 0.5$ \\
$10^4$ & $1.11\times10^{-5}$ & 0.3086 & 98.1
& 0.24695 & 32 & 64 & 0.005  & 0.856 & 0.685 & $8.9 \pm 0.3$
& 0.25062 & 32 & 2  & 0.02   & 0.525 & 0.767 & $19.3 \pm 0.3$ \\
$10^5$ & $1.11\times10^{-6}$ & 0.3086 & 98.0
& 0.24482 & 32 & 64 & 0.001  & 0.971 & 0.970 & $2.9 \pm 0.3$
& 0.24648 & 32 & 2  & 0.01   & 0.724 & 0.649 & $7.6 \pm 0.2$ \\
$10^6$ & $1.11\times10^{-7}$ & 0.3086 & 98.0
& 0.24413 & 32 & 64 & 0.0005 & 0.984 & 0.773 & $0.9 \pm 0.2$
& 0.24487 & 32 & 2  & 0.005  & 0.850 & 0.483 & $3.0 \pm 0.1$ \\
\bottomrule
\end{tabular}%
}
\end{table*}

\paragraph{Result.}
Figure~\ref{fig:pmixed} shows the comparison. Panel (a) plots raw next-token accuracy
against the query horizon $T$, with three reference lines: the public model at 0.2438,
our non-private ceiling at 0.3099, and PMixED's most favourable non-private ceiling at
0.3021 ($N = 8$). PAC-Gumbel remains close to its non-private ceiling across the whole
range, at 0.3086--0.3089. PMixED falls steadily toward the public model, from 0.2636 at
$T = 10^2$ to 0.2449 at $T = 10^6$ under the generous reading, and from 0.2555 to 0.2441
under the rigorous one.

Panel (b) removes the effect of the two methods having different ceilings. Each point is
the fraction of that method's own non-private headroom that survives privatization,
$(A_{\text{private}} - A_{\text{public}}) / (A_{\text{non-private}} - A_{\text{public}})$,
so 100\% means no accuracy loss relative to that configuration's non-private ceiling and
0\% means accuracy equal to the public baseline. The denominator is always the ceiling of
the configuration actually selected at that point, which for PMixED means the $N$ its own
search chose there. PAC-Gumbel retains 98\% at every horizon from $10^2$ to $10^6$.
PMixED retains at most $55.7 \pm 0.3\%$, at its smallest horizon under the generous
reading; by $10^6$ it retains $3.0 \pm 0.1\%$ under that reading and $0.9 \pm 0.2\%$ under
the rigorous one. The margin is narrowest at $T = 10^2$, where $98.4\%$ still exceeds
$55.7\%$ by a factor of 1.8. There is no crossover at any horizon we evaluate, under
either reading. Each PMixED cell averages 16 subsampling draws with per-configuration
seeding, and the intervals above are standard errors across those draws.

At $\varepsilon = 2$, where PMixED has more budget to spend, the picture is unchanged in
shape: PAC-Gumbel retains 98.0--99.0\%, and PMixED retains at most $44.9 \pm 0.3\%$ under
the generous reading, falling to $4.1 \pm 0.2\%$ by $T = 10^6$.

\paragraph{What the accounting forces.}
The headroom figures understate what is happening to the method. As the horizon grows the
accountant drives the selected subsampling rate down --- from $q = 0.05$ at $T = 10^2$ to
$q = 5 \times 10^{-3}$ at $10^6$ under the generous reading, and from $0.02$ to
$5 \times 10^{-4}$ under the rigorous one --- and a small $q$ leaves the sampled expert
set empty on most queries, which then fall back entirely to the public model. Under the
generous reading the fallback rate rises from $19\%$ of queries at $T = 10^2$ to $53\%$ at
$10^4$ and $85\%$ at $10^6$; under the rigorous reading it is already $52\%$ at $T = 10^2$
and reaches $98\%$ by $10^6$. Under a tight budget the accounting does not merely shrink
the private contribution at each release; it drives the method to decline to consult the
private ensemble on most queries. The search accepts this because the amplification bought
by a small $q$ is worth more than the utility lost on the queries answered publicly ---
the same trade, in a sharper form, as its preference for the shard count with the lowest
ceiling.

\paragraph{Why.}
This is not a difference in base models. Panel (a) makes the point without any
normalization: both methods are built on the same public model, and PMixED's best
non-private ceiling, 0.3021 at $N = 8$, is close to our 0.3099, so both start about 0.06
above the public baseline. The mechanism of the decay is visible in the accounting.
PMixED buys privacy by projecting toward the public model, and under a fixed total budget
spread over $T$ queries, the per-query allowance shrinks as $T$ grows, so the accountant
must either shrink $\lambda$ or shrink $q$, and both reduce what the experts contribute.
Our mechanism behaves differently: calibration noise is required only at positions where
the worlds disagree, although the fixed accounting still charges $b$ for every privately
served token. A second observation points the same way: at $\varepsilon \approx 1$
PMixED's search selects $N = 32$ at every operating point even though $N = 32$ has the
\emph{lowest} ensemble ceiling of the three (0.2793, against 0.3021 at $N = 8$), meaning
the amplification from more shards is worth more to it than the quality it gives up.

\paragraph{Caveats.}
Our ensemble sizes are smaller than PMixED's reported $N = 80$, and its own search prefers
the largest $N$ we allow despite the utility cost. A larger corpus permitting $N \ge 64$
could change this trade-off in PMixED's favour, since larger ensembles buy more
amplification but give each expert less data; we do not evaluate that regime. The
subsampling rate is not restricted, so this is the only structural limit we impose.
Absolute numbers are not comparable to the published PMixED results, because our universe
is 2{,}048 articles rather than the full corpus; only the direction and the validated gain
transfer. Headroom is measured against each method's full non-private ensemble
($\lambda = 1$, $q = 1$), so the utility cost of subsampling counts as a privacy cost,
which is appropriate because $q$ is chosen by the privacy accountant. Perplexity is
reported for PMixED only, for the reason given in
Section~\ref{sec:utility-metrics}. And the comparison uses PAC-Gumbel throughout:
PAC-greedy reaches higher accuracy (0.4247 non-private ceiling) but is an argmax
interface, and comparing it to a sampling baseline would charge the difference between
decoders as a privacy result.

%% file: sections/07_discussion.tex
\section{Discussion and Limitations}
\label{sec:discussion}

\paragraph{Where the headroom is.}
Fixed per-token accounting charges $b$ at every release, while measured leakage is about
17\% of what is charged (Section~\ref{sec:privacy-results}). Most releases are unanimous
and draw no noise, and the dissent fraction is numerically close to the measured ratio,
which suggests the two are related --- but they are measured on different protocols, so
we treat the agreement as suggestive rather than as a demonstrated cause. Taken at face
value the gap indicates a factor of roughly $5.8$ in unexploited budget. Reclaiming any
of it safely would require an accounting rule that charges the realized vote distribution
instead of a constant, together with a composition argument for that rule which we have
not made; and such a rule would give an adversary who can predict the public coins a way
to influence the realized charge, a concern that does not arise under fixed $b$
(Remark~\ref{rem:coin-order}). It would also reframe the coupling design problem, since
the ablations show a rule can raise unanimity while making its dissent much deeper
(Section~\ref{sec:generation-results}), and dissent depth is what such a rule would
charge for.

\paragraph{Adaptive temperature.}
The coupling temperature $\tau$ shapes the cross-world vote distribution, hence $A_t$,
$C_t$, and $\Sigma_t$. A history-dependent rule $\tau_t = f(H_{t-1})$, or one keyed to
public quantities at the current context, is already admissible under
Section~\ref{sec:theory}: it is measurable with respect to information the adversary
holds and independent of the realized world. We have not evaluated one. We also do not
claim that moving $\tau$ in either direction monotonically reduces leakage, and the
limits suggest it does not --- as $\tau$ grows the shared coins dominate the per-world
logits, so the worlds agree and no noise is required, but the output stops reflecting the
private models at all.

\paragraph{What the guarantee does not cover.}
The accounting bounds inference about the secret, not reproduction of content. The canary
experiments make the distinction concrete: at $b = 2^{-16}$ membership advantage is
statistically indistinguishable from zero, while the memorized string is still emitted at
essentially the same rate under both membership hypotheses
(Section~\ref{sec:canary-results}). Restricting releases to the public support blocks
off-support emission structurally, but that is a claim about tokens outside $V_t$, not
about in-support reproduction. A deployment that needs content guarantees needs a
separate mechanism.

\paragraph{Limitations.}
The evaluation is a single corpus (WikiText-103), a single base model (GPT-2-small), a
2{,}048-article universe, and a single world collection. Absolute numbers should not be
read against those in the original PMixED paper, since model, data, and training scale
all differ; our head-to-head uses our own reimplementation of PMixED under the matched
setup of Section~\ref{sec:baselines}, validated against the authors' default
configuration before use. We did not train a second world collection, so we cannot
separate properties of the $m/2$ construction from properties of this particular
partition. We do not directly evaluate AdaPMixED, whose data-dependent accounting
differs from the fixed-guarantee comparison considered here. Three
further gaps are worth naming. The
accounting is charged on the internal history while only tokens are released, and we do
not quantify that slack (Remark~\ref{rem:conservatism}). The empirical attacks read the
curator's own posterior, so they are an upper envelope on a token-only adversary rather
than a realistic attacker, and a weaker measured attack would not be evidence of a
stronger guarantee. And the direct mutual-information result estimates $I(S; H_T)$ from
finitely many sessions: the identity is exact, the expectation is not.

\paragraph{Future work.}
The most valuable additions, in order: a second corpus and domain, which would answer the
single-dataset objection most directly; the entropy-charging odometer, which the $5.8\times$
gap motivates; and distillation from PAC-private tokens, which would convert a finite
query budget into a publishable model.

%% file: sections/08_conclusion.tex
\section{Conclusion}
\label{sec:conclusion}

We adapted PAC-private prediction to autoregressive generation by making the secret space
small enough to evaluate at serving time: 128 overlapping worlds, one adapter each, with
noise calibrated per token to how much the worlds disagree. The adaptation required
establishing that a token interface, a self-generated query stream, a singular vote
covariance, and a coupled sampling decoder all remain inside the posterior-aware
composition framework, which Section~\ref{sec:theory} does.

Empirically, the mechanism retains at least 74\% of the fine-tuning gain at a per-token budget
that bounds membership inference below 51.08\% after $10^6$ released tokens, and accuracy
is flat across a $2^{28}$ range of budgets --- not because the budget is inert, but
because most positions are unanimous and therefore draw no noise, while the additional
flips induced on dissenting positions are close to accuracy-neutral. Because the $m$-world
construction keeps the secret space enumerable, we could also estimate the leakage
directly from posterior entropy rather than probing it with an attack, and found it to be
roughly 17\% of what the fixed accounting charges. Against PMixED at matched
membership-inference bounds on the same data, PAC retained 98\% of its non-private
headroom at every query volume tested, with no crossover.

Two broader lessons emerge from our experiments. First, privacy leakage can appear
artificially small when the decoder produces highly repetitive or degenerate text, because
such outputs are also those on which the ensemble tends to agree. Second, low membership
inference does not imply that memorized content cannot still be reproduced: in our canary
experiment, the memorized string remained emittable even when the measured membership
advantage was statistically indistinguishable from zero. These observations are not specific
to our mechanism and should be considered when evaluating private generative models more
generally.

%% file: appendix/appendix.tex
\section{Additional Proofs}
\label{app:proofs}

\subsection{Proof of Theorem~\ref{thm:main}}
\label{app:thm-main}

By the chain rule applied to $H_T = \{(M_t^{U_t}, U_t, R_t)\}_{t=1}^T$,
\[
  I(S; H_T) \;=\; \sum_{t=1}^{T}
  \Big[ I\big(S; M_t^{U_t}, U_t \mid H_{t-1}\big)
      + I\big(S; R_t \mid H_{t-1}, M_t^{U_t}, U_t\big) \Big].
\]
For the first term, (C1) gives $I(S; U_t \mid H_{t-1}) = 0$, and
Proposition~\ref{prop:autoregressive} gives $M_t \perp S \mid H_{t-1}$; since
$M_t^{U_t}$ is a function of $(q_t, U_t)$ and the public adapters, the chain rule yields
$I(S; M_t^{U_t}, U_t \mid H_{t-1}) = 0$. For the second, (C2) ensures $M_t^{U_t}$ is a
deterministic map of $s$ at calibration time, so Proposition~\ref{prop:coupled}(i)
applies and bounds it by $b$; its calibration is valid despite $\operatorname{rank}
C_t \le a_t - 1$ by Lemma~\ref{lem:singular}(i). Summing over $t$ gives
$I(S; H_T) \le bT$.

For the first inequality of the theorem, Lemma~\ref{lem:token}(ii) gives
$I(S; Y_{1:T}) \le I(S; H_T)$. The attack-success bounds follow by the PAC privacy
conversion applied at total budget $B_T = bT$ with the relevant prior: $1/2$ for record
membership, since the $m/2$ construction places every record in exactly half the worlds,
and $1/m$ for identification of the realized world under the uniform prior. \qed

\subsection{Proof of Lemma~\ref{lem:singular}}
\label{app:singular}

Write $A_t = \{v_1, \dots, v_{a_t}\}$ and let $\Delta = \{x \in \mathbb{R}^{a_t} :
\mathbf{1}^\top x = 1\}$. Every realization of $M_t^{U_t}$ is a vertex $e_{v}$ of
$\Delta$, so all differences $e_v - e_{v'}$ lie in $\mathbf{1}^\perp$. Moreover
$C_t \mathbf{1} = \pi_t - \pi_t(\mathbf{1}^\top \pi_t) = 0$, so
$\operatorname{span}(\mathbf{1}) \subseteq \ker C_t$ and
$\mathcal{A} = \operatorname{range}(C_t) \subseteq \mathbf{1}^\perp$, hence
$\operatorname{rank} C_t \le a_t - 1$.
Let
\[
\mu_t=\sum_s P_{t-1}(s)y_s.
\]
Then
\[
C_t
=
\sum_s P_{t-1}(s)
(y_s-\mu_t)(y_s-\mu_t)^\top.
\]
Hence, for any vector $w$,
\[
w^\top C_t w
=
\sum_s P_{t-1}(s)
\bigl((y_s-\mu_t)^\top w\bigr)^2.
\]
Since $C_t$ is positive semidefinite, $w\in\ker(C_t)$ if and only if
$(y_s-\mu_t)^\top w=0$ for every $s$ with $P_{t-1}(s)>0$.
Therefore,
\[
\operatorname{range}(C_t)
=
\operatorname{span}
\{y_s-\mu_t : P_{t-1}(s)>0\}.
\]
(i) Decompose $M_t^{U_t}(
S) = \Pi_{\mathcal{A}} M_t^{U_t}(S) +
\Pi_{\mathcal{A}^\perp} M_t^{U_t}(S)$. By the identity above, for any $s,s'$ in the posterior support,
\[
y_s-y_{s'}
=
(y_s-\mu_t)-(y_{s'}-\mu_t)
\in\mathcal A.
\]
Therefore
\[
\Pi_{\mathcal A^\perp}y_s
=
\Pi_{\mathcal A^\perp}y_{s'},
\]
so $\Pi_{\mathcal A^\perp}M_t^{U_t}(S)$ is constant in $S$.
A component that is constant in $S$ is independent of $S$ and contributes no mutual
information, so
$I(S; M_t^{U_t}(S) + Z_t) = I(S; \Pi_{\mathcal{A}}M_t^{U_t}(S) + \Pi_{\mathcal{A}}Z_t)$,
which is at most $b$ by the noise determination result applied within $\mathcal{A}$,
where $\Sigma_t$ is by construction the calibration it prescribes.

(ii) Conditional on $\mathcal{A}$, the law of $R_t$ given $S = s$ is
$\mathcal{N}(\Pi_{\mathcal{A}} y_s, \Sigma_t)$ restricted to $\mathcal{A}$, a
nondegenerate Gaussian on that subspace with density proportional to
$\exp[-\tfrac12 (R_t - y_s)^\top \Sigma_t^{+} (R_t - y_s)]$. The normalizing constant
involves the pseudo-determinant of $\Sigma_t$, which does not depend on $s$ and
therefore cancels when $P_t$ is normalized over $\mathcal{S}$. Bayes' rule then gives
the stated update.

(iii) If $a_t = 1$ then $\pi_t$ is a point mass, $C_t = 0$, $\mathcal{A} = \{0\}$, and
$\Sigma_t = 0$. The output is constant across worlds, so $I(S; M_t^{U_t}(S)) = 0$ and
the step is free. \qed

\subsection{Proof of Proposition~\ref{prop:coupled}}
\label{app:coupled}

(i) Condition on $(H_{t-1}, M_t^{U_t}, U_t)$. By (C2) the coins are realized before
calibration, so on this conditioning event $s \mapsto M_t^{U_t}(s)$ is a fixed
deterministic map, which is the hypothesis under which the noise determination result
holds. Its output covariance under $P_{t-1}$ is exactly $C_t$, the matrix from which
$\Sigma_t$ was calibrated to budget $b$; Lemma~\ref{lem:singular}(i) then gives
$I(S; R_t \mid H_{t-1}, M_t^{U_t}, U_t) \le b$.

(ii) By (C1), $I(S; U_t \mid H_{t-1}) = 0$. Given $(H_{t-1}, U_t)$, the map $M_t^{U_t}$
is determined by $q_t$ and the public adapters, and $q_t$ depends on $S$ only through
$H_{t-1}$ by Proposition~\ref{prop:autoregressive}; hence
$I(S; M_t^{U_t} \mid H_{t-1}, U_t) = 0$. The chain rule gives
$I(S; M_t^{U_t}, U_t \mid H_{t-1}) = 0$.

(iii) The update multiplies $P_{t-1}$ by the exact conditional density of $R_t$ given
$S = s$ and $(H_{t-1}, M_t^{U_t}, U_t)$, computed in
Lemma~\ref{lem:singular}(ii). By (C3) all three conditioning variables are components of
$H_t$, so the normalized product is $P_{S \mid H_t}$.

Note that (i)--(iii) nowhere use independence between $U_t$ and $q_t$; only (C1)--(C3)
are used. \qed

\subsection{Coins: verification and scope}
\label{app:coins}

\paragraph{Context-derived coins satisfy (C1)--(C3).}
Let $U_t = f(q_t, H_{t-1})$ for a public measurable $f$. Then $U_t$ is
$\sigma(H_{t-1}, M_t)$-measurable. Since $M_t \perp S \mid H_{t-1}$
(Proposition~\ref{prop:autoregressive}), the data-processing inequality for conditional
mutual information gives $I(S; U_t \mid H_{t-1}) \le I(S; M_t \mid H_{t-1}) = 0$, so
(C1) holds. $U_t$ is determined as soon as $(H_{t-1}, q_t)$ are, hence before
calibration, so (C2) holds. Since $f$ is public and $(H_{t-1}, q_t)$ are recorded, $U_t$
is recoverable from the history, so (C3) holds. Consequently
Proposition~\ref{prop:coupled} and Theorem~\ref{thm:main} apply verbatim to such a
scheme.

\paragraph{Adversarial coin prediction.}
Suppose the adversary can compute $U_t$ in advance and searches over prompts for one
whose coins induce a vote pattern it considers favourable. The per-step bound
$I(S; R_t \mid H_{t-1}, M_t^{U_t}, U_t) \le b$ is conditional on the realized pair and
holds for every realization, so no such search can exceed the per-step charge. This is
the operational content of instance-based calibration: the noise is matched to whatever
mechanism is submitted, rather than to a worst case fixed in advance.

\paragraph{What (C1) excludes.}
Coins seeded from the realized world index, from the realized world's logits at step
$t$, or from any quantity depending on $S$ other than through $H_{t-1}$ violate (C1) and
are not covered. The prohibition is on dependence on the secret, not on dependence on
the query.

\paragraph{Caveat for entropy-charging variants.}
Under a variant of the accounting in which the per-step charge depends on the realized
vote distribution rather than being fixed at $b$ --- for instance charging the vote
entropy --- an adversary able to predict the coins could steer the realized charge, and
the argument above would require revisiting. All results in this paper use the fixed
per-token budget $b$, for which the issue does not arise.

\subsection{Why the bound cannot be routed through $R_{1:T}$ alone}
\label{app:not-R}

Under coupled decoding, $Y_t = \arg\max_{v \in A_t}(R_t)_v$ with $A_t$ determined by
$(q_t, U_t)$. Thus $Y_t$ is not a function of $R_t$ alone and $S \to R_{1:T} \to Y_{1:T}$
is not a Markov chain, so the data-processing inequality does not apply in that form.
Conditioning restores the deterministic map but not the inequality: one obtains
\[
  I(S; Y_{1:T}) \;\le\; I(S; Y_{1:T} \mid U_{1:T}) \;\le\; I(S; R_{1:T} \mid U_{1:T}),
\]
using $I(S; U_{1:T}) = 0$ for the first step and conditional data processing for the
second, and a conditional mutual information need not be bounded by its unconditional
counterpart. The joint history is the correct object: since $I(S; U_{1:T}) = 0$,
\[
  I(S; R_{1:T}, U_{1:T}) \;=\; I(S; U_{1:T}) + I(S; R_{1:T} \mid U_{1:T})
  \;=\; I(S; R_{1:T} \mid U_{1:T}),
\]
and it is this quantity, a component of $I(S; H_T)$, that the composition argument
bounds. Under greedy decoding $U_t$ is null, $A_t$ is $\sigma(H_{t-1}, M_t)$-measurable,
and the two routes coincide.


\subsection{Matched-Comparison Values}
\label{app:pmixed-values}

Table~\ref{tab:pmixed-full} reports the main $\varepsilon \approx 1$ comparison in
Section~\ref{sec:pmixed-results}. As a sensitivity check, Table~\ref{tab:pmixed-full-eps2}
repeats the comparison at $\varepsilon=2$, giving PMixED a looser privacy target while
keeping the same evaluation protocol and search space.
\begin{table*}[t]
\centering
\caption{The same comparison at the looser operating point, a membership-inference bound
of $0.8808$ ($\varepsilon = 2.0$, total PAC budget $0.328$ nats). Columns are as in
Table~\ref{tab:pmixed-full}.}
\label{tab:pmixed-full-eps2}
\small
\setlength{\tabcolsep}{3pt}
\renewcommand{\arraystretch}{1.05}
\resizebox{\linewidth}{!}{%
\begin{tabular}{llcc ccccccc ccccccc}
\toprule
& & \multicolumn{2}{c}{PAC-Gumbel}
& \multicolumn{7}{c}{PMixED (rigorous)}
& \multicolumn{7}{c}{PMixED (generous)} \\
\cmidrule(lr){3-4}\cmidrule(lr){5-11}\cmidrule(lr){12-18}
$T$ & $b$ (nats) & acc. & head.\ (\%)
& acc. & $N$ & $\alpha$ & $q$ & $f_\emptyset$ & $\bar\lambda$ & head.\ (\%)
& acc. & $N$ & $\alpha$ & $q$ & $f_\emptyset$ & $\bar\lambda$ & head.\ (\%) \\
\midrule
$10^2$ & $3.28\times10^{-3}$ & 0.3092 & 99.0
& 0.25668 & 32 & 8  & 0.03   & 0.375 & 0.649 & $36.3 \pm 0.8$
& 0.26520 & 16 & 2  & 0.1    & 0.185 & 0.853 & $44.9 \pm 0.3$ \\
$10^3$ & $3.28\times10^{-4}$ & 0.3087 & 98.2
& 0.25366 & 32 & 64 & 0.02   & 0.527 & 0.514 & $27.8 \pm 0.7$
& 0.25879 & 32 & 2  & 0.03   & 0.373 & 0.917 & $42.3 \pm 0.5$ \\
$10^4$ & $3.28\times10^{-5}$ & 0.3086 & 98.0
& 0.24845 & 32 & 32 & 0.005  & 0.848 & 0.870 & $13.1 \pm 0.6$
& 0.25233 & 32 & 2  & 0.02   & 0.525 & 0.840 & $24.1 \pm 0.3$ \\
$10^5$ & $3.28\times10^{-6}$ & 0.3086 & 98.1
& 0.24559 & 32 & 64 & 0.002  & 0.939 & 0.880 & $5.1 \pm 0.4$
& 0.24730 & 32 & 2  & 0.01   & 0.724 & 0.762 & $9.9 \pm 0.3$ \\
$10^6$ & $3.28\times10^{-7}$ & 0.3086 & 98.0
& 0.24433 & 32 & 32 & 0.0005 & 0.985 & 0.984 & $1.5 \pm 0.3$
& 0.24526 & 32 & 2  & 0.005  & 0.850 & 0.620 & $4.1 \pm 0.2$ \\
\bottomrule
\end{tabular}%
}
\end{table*}

Across Tables~\ref{tab:pmixed-full} and~\ref{tab:pmixed-full-eps2}, three features are worth noting. First, PMixED's search selects the largest
shard count we allow, $N = 32$, at every operating point but one, even though $N = 32$ has
the lowest ensemble ceiling of the three; the amplification from more shards outweighs the
quality lost by splitting the corpus further. The single exception is the generous reading
at $\varepsilon = 2$, $T = 10^2$, which selects $N = 16$ --- the one point in either table
where the budget is loose enough and the horizon short enough for ensemble quality to win.
Second, the selected subsampling rate falls monotonically with the horizon, reaching
$5 \times 10^{-4}$ under the rigorous convention and $5 \times 10^{-3}$ under the
generous convention at $T = 10^6$, correspondingly, the fraction of queries answered
by the public model alone rises to 98\% and 85\%, respectively. Third, the two
readings select systematically different R\'enyi orders: the generous reading always takes
$\alpha = 2$, while the rigorous reading is pushed to $\alpha \in \{8, \dots, 64\}$ to make
the tight conversion pay.






